\definecolor{blued}{RGB}{70,197,221}
\definecolor{citrine}{rgb}{0.89, 0.82, 0.04}
\definecolor{graphicbackground}{rgb}{0.96,0.96,0.8}
\definecolor{rouge1}{RGB}{226,0,38}  
\definecolor{orange1}{RGB}{243,154,38}  
\definecolor{jaune}{RGB}{254,205,27}  
\definecolor{blanc}{RGB}{255,255,255} 
\definecolor{rouge2}{RGB}{230,68,57}  
\definecolor{orange2}{RGB}{236,117,40}  
\definecolor{taupe}{RGB}{134,113,127} 
\definecolor{gris}{RGB}{91,94,111} 
\definecolor{bleu1}{RGB}{38,109,131} 
\definecolor{bleu2}{RGB}{28,50,114} 
\definecolor{vert1}{RGB}{133,146,66} 
\definecolor{vert3}{RGB}{20,200,66} 
\definecolor{vert2}{RGB}{157,193,7} 
\definecolor{darkyellow}{RGB}{233,165,0}  
\definecolor{lightgray}{rgb}{0.9,0.9,0.9}
\definecolor{darkgray}{rgb}{0.6,0.6,0.6}
\definecolor{babyblue}{rgb}{0.54, 0.81, 0.94}
\definecolor{citrine}{rgb}{0.89, 0.82, 0.04}
\definecolor{misogreen}{rgb}{0.25,0.6,0.0}
\DeclareMathOperator*{\argmax}{arg\,max}
\newcommand{\sset}[1]{\left\{#1\right\}}
\newcommand{\II}[1]{\mathbb{I}{\left\{#1\right\}}}
 \newtheorem{lemma}{Lemma}
 \newtheorem{theorem}{Theorem}
 \newtheorem{definition}{Definition}
\newtheorem{remark}{Remark}
 \newtheorem{proposition}{Proposition}
\newtheorem{fact}{Fact}
 \newtheorem{example}{Example}
\newcommand{\R}{\mathbb{R}}
\newcommand{\EE}[1]{\mathbb{E}\left[#1\right]}
\newcommand{\pa}[1]{\left(#1\right)}
\newcommand{\norm}[1]{\left\|#1\right\|}
\newcommand{\abs}[1]{\left|#1\right|}
\newcommand{\imp}{\Rightarrow}
\newcommand{\CommaBin}{\mathbin{\raisebox{0.5ex}{,}}}
\newcommand{\transpose}{^\mathsf{\scriptscriptstyle T}}
\newcommand{\cA}{\mathcal{A}}
\newcommand{\cB}{\mathcal{B}}
\newcommand{\cC}{\mathcal{C}}
\newcommand{\cI}{\mathcal{I}}
\newcommand{\cL}{\mathcal{L}}
\newcommand{\cO}{\mathcal{O}}
\newcommand{\cP}{\mathcal{P}}
\newcommand{\be}{{\bf e}}
\newcommand{\bX}{{\bf X}}
\renewcommand{\epsilon}{\varepsilon}
\renewcommand{\bar}{\overline}
\newcommand{\bdelta}{{\boldsymbol \delta}}
\newcommand{\bmu}{{\boldsymbol \mu}}
\newcommand{\nothere}[1]{}
\newcommand{\rd}[1]{{#1}}
\newcommand{\arms}{n}
\newcommand{\actions}{\cA}
\newcommand{\kl}{\text{kl}}
\newcommand{\vmean}[1]{\bar{\rd{\bmu}}_{#1}}
\newcommand{\mean}[1]{\bar{\mu}_{#1}}
\newcommand{\counter}[2]{\rd{{N}}_{#1,#2}}
\newcommand{\blambda}{{\boldsymbol \lambda}}
\newcommand{\N}{\mathbb{N}}
\newcommand{\tableq}[1]{{$\!\begin{aligned} 
               \centering #1
                \end{aligned}$}}
\newcommand{\tabtab}[1]{\begin{tabular}{c}#1\end{tabular}}
\let\originalleft\left
\let\originalright\right
\renewcommand{\left}{\mathopen{}\mathclose\bgroup\originalleft}
\renewcommand{\right}{\aftergroup\egroup\originalright}
\icmltitlerunning{Exploiting Structure of Uncertainty for Efficient Matroid Semi-Bandits}
\begin{document}
\twocolumn[
\icmltitle{Exploiting Structure of Uncertainty for Efficient Matroid Semi-Bandits}




\begin{icmlauthorlist}
\icmlauthor{Pierre Perrault}{inria,ens}
\icmlauthor{Vianney Perchet}{ens,criteo}
\icmlauthor{Michal Valko}{inria,ndm}
\end{icmlauthorlist}

\icmlaffiliation{inria}{SequeL team, INRIA Lille - Nord Europe}
\icmlaffiliation{ens}{CMLA, ENS Paris-Saclay}
\icmlaffiliation{criteo}{Criteo AI Lab}
\icmlaffiliation{ndm}{now with DeepMind}

\icmlcorrespondingauthor{P.\,Perrault}{pierre.perrault@inria.fr}
\icmlkeywords{combinatorial semi-bandit, matroid, submodular, local search, greedy, efficiency,approximation}

\vskip 0.3in
]



\printAffiliationsAndNotice{}  

\begin{abstract}
We improve the efficiency of algorithms for stochastic \emph{combinatorial semi-bandits}.
In most interesting problems, state-of-the-art algorithms take advantage of structural properties of rewards, such as \emph{independence}. However, while 
being optimal in terms of asymptotic regret, these algorithms are inefficient. In our paper, 
we first reduce their implementation to a specific \emph{submodular maximization}. Then, in case of \emph{matroid} constraints,  
we design adapted approximation routines, thereby providing the first efficient algorithms that rely on reward structure to improve regret bound. In particular, we improve the state-of-the-art efficient gap-free regret bound by a factor $\sqrt{m}/\log m$, where $m$ is the maximum action size.
Finally, we show how our improvement translates to more general \emph{budgeted combinatorial semi-bandits}.
\end{abstract}
\vspace{-0.6cm}
\section{Introduction}
\label{sec:intro}
Stochastic bandits model sequential decision-making  in which an \emph{agent} selects an arm (a decision) at each round and observes a realization of the corresponding unknown reward distribution. The goal is to maximize the expected cumulative reward, or equivalently, to minimize the \emph{expected regret},  defined as the difference between the expected cumulative reward achieved by an oracle algorithm always selecting the optimal arm and that achieved by the agent. To accomplish this objective, the agent must trade-off between \emph{exploration} (gaining information about reward distributions) and \emph{exploitation} (using greedily the information collected so far) 
as it was already discovered by \citet{robbins1952some}. Bandits 
have been applied to many fields such as mechanism design \citep{mohri2014optimal}, search advertising \citep{tran2014efficient}, and personalized recommendation \citep{li2010contextual}. 
We improve the computational efficiency (i.e., the time and space complexity) of their \emph{combinatorial} generalization, in which the agent selects at each round a \emph{subset} of arms,  that we refer to as an  \emph{action} in the rest of the paper \citep{cesa-bianchi2012combinatorial,gai2012combinatorial,audibert2014regret,Chen2014}.

Different kinds of feedback provided by the environment are possible. 
First, with \emph{bandit feedback} (also called full bandit or opaque feedback), the agent only observes the \emph{total} reward associated to the selected action. Second,  with \emph{semi-bandit feedback}, the agent observes the \emph{partial} reward of each base arm in the selected action. Finally, with \emph{full information feedback}, the agent observes the partial reward of all arms. 
We give results for  semi-bandit feedback only.

There are two main questions that come up with combinatorial (semi)-bandits: 1$^\circ$ \emph{How can the stochastic structure of the reward vector  be exploited to reduce the regret?} and 2$^\circ$ \emph{Can algorithms be efficient?} \citet{combes2015combinatorial} answer the first question assuming that reward distributions are \emph{mutually independent}. Later,
\citet{Degenne2016} generalize the algorithm of  \citet{combes2015combinatorial} 
to a larger class of \emph{sub-Gaussian} rewards
by
exploiting the covariance structure of
the arms. They also show the optimality of proposed algorithms, in particular, that an upper bound on their regret matches the asymptotic gap dependent lower bound of this class.
However, algorithms of \citet{combes2015combinatorial} and \citet{Degenne2016} are computationally inefficient.
The second question is studied by \citet{kveton2015tight}, who give an efficient algorithm based on the \textsc{UCB} algorithm of \citet{auer2002finite}.
 While being efficient, the algorithm of \citet{kveton2015tight} assumes the worst case class of \emph{arbitrary correlated}\footnote{Any dependence can exist between rewards.} rewards, i.e., it does not  exploit any  properties of rewards and therefore does not match the lower bound of \citet{Degenne2016}. 

 On the other hand, efficient algorithms for matroid \citep{whitney35abstract} semi-bandits exist \citep{kveton2014matroid,Talebi2016} and their regret bounds match the asymptotic gap dependent lower bound, which is the same for both sub-Gaussian and arbitrary correlated rewards. Among these algorithms, the state-of-the-art gap-free regret bound is of order $\cO\pa{\sqrt{\arms m T\log T}}$, where $T$ is the number of rounds, $\arms$ is the number of base arms, and $m$ is the maximum action size  \citep{kveton2014matroid}.
 
 \paragraph{Our contributions}
 In this paper, we show how algorithms of \citet{combes2015combinatorial} and \citet{Degenne2016} can be \emph{efficiently} approximated for matroid. This improves the bound of \citet{kveton2014matroid} by a factor $\sqrt{m}/\log(m)$ for the class of \emph{sub-Gaussian} rewards. We first locate the source of inefficiency of these algorithms: At each round, they have to solve a \emph{submodular maximization} problem. We then provide efficient, adapted \textsc{LocalSearch} and \textsc{Greedy}-based algorithms that exploit the submodularity to give approximation guarantees on the regret upper bound. These algorithms can be of independent interest. 
We also extend our approximation techniques to more challenging \emph{budgeted combinatorial semi-bandits} via binary search methods
and exhibit the same improvement for this setting as well.  
 
\paragraph{Related work} 
Efficiency in combinatorial bandits, or more generally, in linear bandits with a large set of arms is an open problem. Some methods, such as \emph{convex hull representation}, \citep{Koolen2010}, \emph{hashing} \citep{Jun2017}, or \emph{DAG-encoding} \citep{Sakaue2018}
 reduce the algorithmic complexity. For semi-bandit feedback, in the adversarial case, \citet{Neu2013} proposed an efficient implementation  via \emph{geometric resampling}. In the stochastic case, many efficient Bayesian algorithms exist (see e.g., \citealp{Russo2013,russo2016}), although they are only shown to be optimal for \emph{Bayesian regret}.\footnote{A setting where arm distributions depend on a random parameter drawn from a known prior, and expectation of the regret is also taken with respect to this parameter.} 

\section{Background}
We denote the set of arms by $[\arms]\triangleq\sset{1,2,\dots,\arms}$,  we typeset vectors in bold and indicate components with indices, i.e., $\mathbf{a}=(a_i)_{i\in [\arms]} \in \R^\arms$.
We let $\cP([\arms])\triangleq\sset{A,A\subset [\arms]}$ be the power set of $[\arms]$. Let
$\be_i\in \R^\arms$ denote the $i^{\rm th}$  canonical  unit  vector. The \emph{incidence vector} of any set $A\in \cP([\arms])$ is
$$\be_A\triangleq\sum_{i \in A}\be_i.$$
The above definition allows us to represent a subset of $[\arms]$ as an element of $\sset{0,1}^\arms$.
We denote the Minkowski sum of two sets $Z,Z'\subset\R^\arms$ as $Z+ Z'\triangleq\sset{z+z',z\in Z,z'\in Z'}$, and  $Z+ z'\triangleq Z+\sset{z'}$. 
Let $\actions\subset\cP([\arms])$ be a set of \emph{actions}. We define the maximum possible cardinality of an element of $\actions$ as $m\triangleq\max\sset{\abs{A},~A\in \actions}\!.$ 
\subsection{ Stochastic Combinatorial Semi-Bandits}
In combinatorial semi-bandits, an agent selects an action $A_t\in \actions$ at each round $t\in\N^\star$, and receives a reward $\be_{A_t}\transpose \bX_t$, where $\bX_t\in \R^\arms$ is an unknown random vector of rewards. The successive reward vectors $\pa{\bX_t}_{t\geq 1}$ are i.i.d., with an unknown mean $\bmu^\star\triangleq\EE{\bX}\in \R^\arms$, where $\bX=\bX_1$.
 After selecting an action $A_t$ in round~$t$, the agent observes the partial reward of each individual arm in $A_t$. 
The goal of the agent is to
 minimize the expected regret, defined with $A^\star\in\argmax_{A\in \actions}\be_{A}\transpose\bmu^\star$ as 
\[R_T\triangleq\EE{\sum_{t=1}^T \pa{\be_{A^\star}-\be_{A_t}}\transpose\bX_t}.\]
For any action $A\in\actions$, we define its gap as the difference $\Delta\pa{A}\triangleq \pa{\be_{A^\star}-\be_{A}}\transpose\bmu^\star$. We then rewrite the expected cumulative regret as $R_T=\EE{\sum_{t=1}^T \Delta\pa{A_t}}$. Finally, we define $\Delta\triangleq \min_{A\in \actions,~\Delta\pa{A}>0}\Delta\pa{A}$.

Combinatorial semi-bandits have been introduced by \citet{cesa-bianchi2012combinatorial}. More recently, different algorithms have been proposed \citep{Talebi2013,combes2015combinatorial,kveton2015tight,Degenne2016}, depending whether the random vector $\bX$ satisfies specific properties. Some of these properties commonly assumed  are a subset of the following ones:
\begin{enumerate}[$(i)$]
\item $X_1,\dots,X_\arms\in \R$ are \emph{mutually independent},
 \item  $X_1,\dots,X_\arms\in \R$ are \emph{arbitrary correlated},
\item $\bX\in [-1,1]^\arms$,
\item $\bX\in\R^\arms$ is \emph{multivariate sub-Gaussian}, \\ i.e., $\EE{e^{\blambda\transpose\pa{\bX-\bmu^\star}}}\leq e^{\norm{\blambda}_2^2/2},\ \forall\blambda\in \R^\arms$,
\item $\bX\in\R^\arms$ is \emph{component-wise sub-Gaussian}, \\ i.e., $\EE{e^{\lambda_i\pa{X_i-\mu^\star_i}}}\leq e^{\lambda_i^2/2},\ \forall i\in[\arms],~\forall\blambda\in \R^\arms$. 
\end{enumerate}
\subsection{Lower Bounds}
Combining some of the above properties, we  consider different classes of possible distributions for $\bX$. In Table~\ref{table:lower}, we show two existing gap-dependent lower bounds on $R_T$ that depend on the respective class. They are valid for at least one distribution of $\bX$ belonging to the corresponding class, one combinatorial structure $\actions\subset \cP([\arms])$, and for any consistent algorithm \citep{lai1985asymptotically}, for which the regret on any problem verifies $R_T=o(T^a)$ as $T\to\infty$ for all $a>0$. Table~\ref{table:lower} suggests that a tighter regret rate can be reached with some prior knowledge on the random vector~$\bX$.

\begin{table}
\caption{Gap-dependent lower bounds proved on different classes of possible distributions for $\bX$.}
\vspace{0.1in}
\centering
\begin{tabular}{|c|c|}
\hline
\tabtab{\emph{Class of possible}\\\emph{ reward distributions}}
  & \tabtab{\emph{Gap-dependent} \\\emph{lower bound}} 
\\
 \hline
\tableq{&(i)+(iii)\\
\imp &(i)+(v)\\
\imp
 &(iv)}&\tabtab{\tableq{\Omega\pa{\frac{\arms\log T}{\Delta}}}\\\citealp{combes2015combinatorial}}
 \\ \hline
  \tableq{&(ii)+(iii)
 \\ \imp&
 (ii)+ (v)}&\tabtab{\tableq{\Omega\pa{\frac{\arms m\log T}{\Delta}}}\\\citealp{kveton2015tight}}
 \\ \hline
\end{tabular}
\label{table:lower}
\end{table}

\begin{table*}
\vskip -0.15in
\caption{Some combinatorial semi-bandit algorithms and their properties.}
\vspace{0.1in}
\centering
\begin{tabular}{|c|c|c|c|c|}
 \hline
\emph{Class} &\emph{Algorithm} & $p$ &$g_{i,t}$ (\emph{for} $\counter{i}{t-1}\geq 1$,\emph{ up to universal factor}) & \emph{Efficient?} \\
 \hline
 $(ii)+(v)$ &   \textsc{CUCB} \citep{kveton2015tight}& $\infty$&\tableq{\delta\mapsto\frac{{\delta}^2{\counter{i}{t-1}}}{\log(t)}}& Yes\\
 $(i)+(iii)$& \textsc{ESCB-KL} \citep{combes2015combinatorial} & 1 & \tableq{\delta\mapsto{\frac{\kl\pa{\frac{1+\mean{i,t-1}}{2}\CommaBin\frac{1+\mean{i,t-1}+\delta}{2}}\counter{i}{t-1}}{\log(t)+m
 }}}& No \\
 $(iv)$ &\tabtab{\textsc{ESCB} \citep{combes2015combinatorial}, \\\textsc{OLS-UCB} \citep{Degenne2016} }& 1&\tableq{\delta\mapsto{\frac{{\delta}^2\counter{i}{t-1}}{\log(t)+m
 }}} & No\\  \hline
\end{tabular}
\label{table:algos}
\end{table*}
  
\section{(In)efficiency of Existing Algorithms}
In this section, we discuss the efficiency of existing algorithms matching the lower bounds in Table~\ref{table:lower}. 
We consider that an algorithm is \emph{efficient} as  soon as the time and space complexity for each round $t$ is polynomial in $\arms$ and polylogarithmic\footnote{In streaming settings with near real-time requirements, it is imperative to have
algorithms that can run with a complexity that stay almost constant across rounds. } in~$t$.
Notice that the per-round complexity   depends substantially on~$\actions$.
We assume $\actions$ is such that linear optimization problems on $\actions$ --- of the form $\max_{A\in \actions} \be_A\transpose \bdelta$ for some $\bdelta\in \R^\arms$ --- can be solved efficiently. As a consequence, an agent knowing~$\bmu^\star$ can efficiently compute $A^\star$.
Assuming efficient linear maximization is crucial (cf.\,\citealp{Neu2013,combes2015combinatorial,kveton2015tight,Degenne2016}). Without this assumption, e.g., for $\actions$ being dominating sets in a graph, even the offline problem cannot be solved efficiently, and we would have to consider the notion of approximation regret instead, as was done by  \citet{CWY13}.
\subsection{A Generic Algorithm}
As mentioned above, the action $A_t$ is selected based on the feedback received up to round $t-1$. A common statistic computed from this feedback is the \emph{empirical average} of each arm $i\in [\arms]$, defined as
$$\mean{i,0}=0,\quad\forall t\geq 2,~\mean{i,t-1}=\frac{\sum_{u\in[t-1]}\II{i\in A_u}X_{i,u}}{\counter{i}{t-1}}\CommaBin$$
where $\forall t\geq 1,~\counter{i}{t-1}\triangleq\sum_{u\in[t-1]}\II{i\in A_u}$.
Many combinatorial semi-bandit algorithms, in particular, those listed in Table~\ref{table:algos}, can be seen as a special case of  Algorithm~\ref{algo:standard} for different confidence regions $\cC_t$ around $\vmean{t-1}$. 
\begin{algorithm}[H]
\caption{Generic confidence-region-based algorithm.} \label{algo:standard}
\begin{algorithmic}
\STATE At each round $t:$
\STATE \quad Find a confidence region $\cC_t\subset \R^\arms.$
\STATE \quad  Solve the bilinear program $$\pa{\bmu_t,{A_t}}\in\argmax_{\bmu\in \cC_t,A\in \actions} \be_A\transpose \bmu~.$$
\STATE \quad Play $A_t$.
\end{algorithmic}
\end{algorithm}

We further assume that $\cC_t$ is defined  through some parameters $p,r\in\sset{1,\infty}$, and some functions $g_{i,t}$, $i\in  [\arms]$ by \begin{equation*}
\cC_t
\triangleq[-r,r]^\arms\cap 
\pa{\vmean{t-1}+\sset{\bdelta\in \R^\arms,\norm{\pa{g_{i,t}(\delta_i)}_i}_p\leq1}},\label{def:C_t}
\end{equation*} where $g_{i,t}=0$ if $\counter{i}{t-1}=0$ and, otherwise, is convex, strictly decreasing on $[-r-\mean{i,t-1},0]$ and strictly increasing on $[0,r-\mean{i,t-1}]$ such that $g_{i,t}(0)=0$.
Typically, $r = 1$ under assumption $(iii)$ and $r=\infty$ otherwise.  
 Table~\ref{table:algos} lists  variants of Algorithm~\ref{algo:standard}, with the corresponding reward class under which they can be used.  Each of these algorithms is matching the lower bound corresponding to the respective reward class considered in Table~\ref{table:lower}, i.e., $R_T$ is a `big $\cO$' of the lower bound, up to a polylogarithmic factor in $m$ \citep{Degenne2016}.
Notice that \textsc{ThompsonSampling} is not an instance of Algorithm~\ref{algo:standard}. However, we are not aware of any tight analysis: The one 
by \citet{Wang2018} matches the lower bound $\arms m \log(T)/\Delta$, but only for mutually independent rewards, where the lower bound is $\arms \log(T)/\Delta$.
 The regret upper bound  of algorithms in Table~\ref{table:lower} with $p=1$ have an additive constant term w.r.t.\,$T$ but exponential in $\arms$, which can be replaced with a different analysis to get either: \begin{itemize} 
 \item an exponential term in $m$ plus a term of order $1/\Delta^2$,\item a term of order $1/\Delta^2$ --- by changing $\log(t)+m$ to 
  $\log(t)+m\log\log(t)$ in the algorithm, \item or can be removed --- by changing $\log(t)+m
  $ to $\log(t)+\arms\log\log(t)$ in the algorithm.\end{itemize} 
 
 On  one hand, the arbitrary correlated case can be considered as solved, since  
 the matching lower bound algorithm \mbox{\textsc{CUCB}} \citep{kveton2015tight} is efficient.\footnote{In Theorem~\ref{thm:submodconf}, we recover that $A_t$ is computed by Algorithm~\ref{algo:standard} by solving a linear optimization
problem.} On the other hand, considering the reward class given by the first line of Table~\ref{table:lower}, the known algorithms that match the lower bound are inefficient.\footnote{$\actions$ may have up to $2^\arms$ elements.}
We further discuss the efficiency 
of Algorithm~\ref{algo:standard} in the following subsection.

\subsection{Submodular Maximization}
In Algorithm~\ref{algo:standard},  only $A_t$ needs to be computed. It is a maximizer over $\actions$ of the set function 
 \begin{align}
 \begin{array}{ccc}
 \cP([\arms]) & \to & \R \hfill \\
   A & \mapsto & \max_{\bmu\in \cC_t} \be_A\transpose \bmu~. \\
\end{array}
  \label{A_t}
 \end{align}
 We can easily evaluate the function~\eqref{A_t} above for some set $A\in \cP([\arms])$, since it only requires solving a linear optimization problem on the convex\footnote{$\cC_t$ is convex since functions $g_{i,t}$ are convex.} set $\cC_t$. In Proposition~\ref{prop:close}, we show that  in some cases, the evaluation can be even simpler. However, maximizing~the function~\eqref{A_t} over a combinatorial set $\actions$ is not straightforward. 
Before studying this function more closely, Definition~\ref{def:setfct} recalls some well-known properties that can be satisfied by a set function $F:\cP([\arms])\to \R$. 

\begin{definition}\label{def:setfct}A set function $F$ is:
\begin{itemize}
\vspace{-0.1in}
\item{normalized, if}
  $F(\emptyset)=0,$
    \item{linear (or modular) if}
  $F(A)= \be_A\transpose \bdelta+b$, for some $\bdelta\in \R^\arms,~b\in \R,$
 \item{non-decreasing if} $F(A)\leq F(B)~\forall~A\subset B\subset [\arms],$ 
  \item{submodular if} for all $A,B\subset [\arms]$ ,$$F(A\cup B)+F(A\cap B)\leq F(A)+F(B).$$
Equivalently, $F$ is submodular if for all $A\subset B\subset [\arms]$, and $i\notin B$, 
$F(A\cup \sset{i})-F(A)\geq F(B\cup \sset{i})-F(B).$
 \end{itemize}
\end{definition}
 
The function~\eqref{A_t} is clearly normalized, and it can be decomposed into two set functions in the following way, \begin{align*}\forall A\subset [\arms],~ \max_{\bmu\in \cC_t} \be_A\transpose \bmu={\be_A\transpose\vmean{t-1}+ \max_{\bdelta\in \cC_t-\vmean{t-1}}\be_A\transpose\bdelta}.\end{align*} 
The linear part $A\mapsto\be_A\transpose\vmean{t-1}$ is efficiently maximized alone, we thus focus on the other part, $A\mapsto\max_{\bdelta\in \cC_t-\vmean{t-1}}\be_A\transpose\bdelta $, usually called an \emph{exploration bonus}. It aims to compensate for the negative selection bias of the first term. We define 
\begin{align*}
\cC_t^+&
\triangleq\![-r,r]^\arms\cap 
\pa{\vmean{t-1}+\sset{\bdelta\in \R^\arms_+,\norm{\pa{g_{i,t}(\delta_i)}_i}_p\leq1}}
\\&=
\cC_t\cap\sset{\bmu\in \R^\arms,~\bmu\geq \vmean{t-1}}
\end{align*}
and rewrite $A\mapsto\max_{\bdelta\in \cC_t-\vmean{t-1}}\be_A\transpose\bdelta $ through Lemma~\ref{Ct}. 
\begin{lemma}
For all $A\in \cP([\arms])$, $\max_{\bdelta\in \cC_t-\vmean{t-1}}\be_A\transpose\bdelta=\max_{\bdelta\in \cC_t^+-\vmean{t-1}}\be_A\transpose\bdelta.$
\label{Ct}
\end{lemma}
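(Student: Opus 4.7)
\textbf{Proof plan for Lemma~\ref{Ct}.} The inclusion $\cC_t^+\subset \cC_t$ is immediate from the definition, so $\cC_t^+-\vmean{t-1}\subset \cC_t-\vmean{t-1}$ and therefore the $\geq$ direction of the claimed equality is trivial. For the reverse direction, the plan is to show that for every $\bdelta\in \cC_t-\vmean{t-1}$ one can construct a lifted vector $\bdelta'\in \cC_t^+-\vmean{t-1}$ with $\be_A\transpose\bdelta'\geq \be_A\transpose\bdelta$. The natural candidate is the componentwise positive part $\bdelta'\triangleq \bdelta^+$, i.e.\ $\delta_i'=\max(\delta_i,0)$.

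First I would check feasibility of $\bdelta'$ in $\cC_t^+-\vmean{t-1}$. Non-negativity is obvious. For the box constraint $\vmean{t-1}+\bdelta'\in[-r,r]^\arms$: whenever $\delta_i\geq 0$ we have $\delta_i'=\delta_i$ and the constraint is inherited from $\bdelta\in \cC_t-\vmean{t-1}$; whenever $\delta_i<0$ we have $\delta_i'=0$, so $\mean{i,t-1}+\delta_i'=\mean{i,t-1}$, which lies in $[-r,r]$ because $\mean{i,t-1}$ is either zero (if $\counter{i}{t-1}=0$) or an average of observations already in $[-r,r]$ under assumption $(iii)$ (and the bound is vacuous when $r=\infty$).

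The main step, and the only one requiring real work, is to verify the norm constraint $\norm{(g_{i,t}(\delta_i'))_i}_p\leq 1$. Here I would use the stated shape of $g_{i,t}$: it vanishes at $0$, is strictly decreasing on $[-r-\mean{i,t-1},0]$, and strictly increasing on $[0,r-\mean{i,t-1}]$; in particular $g_{i,t}\geq 0$ everywhere on its domain and $g_{i,t}(\delta_i^+)\leq g_{i,t}(\delta_i)$ for every $i$ (with equality when $\delta_i\geq 0$, and because $g_{i,t}(\delta_i')=g_{i,t}(0)=0\leq g_{i,t}(\delta_i)$ when $\delta_i<0$). This pointwise domination, combined with monotonicity of the $p$-norm on non-negative vectors, yields
\[
\norm{(g_{i,t}(\delta_i'))_i}_p\leq \norm{(g_{i,t}(\delta_i))_i}_p\leq 1.
\]
The degenerate case $\counter{i}{t-1}=0$ (where $g_{i,t}\equiv 0$) is handled automatically by the same inequality.

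Finally, since $\bdelta'\geq \bdelta$ componentwise and $\be_A\geq \bZero$, we get $\be_A\transpose\bdelta'\geq \be_A\transpose\bdelta$, which proves the $\leq$ direction. I do not expect any serious obstacle: the only place any care is needed is in exploiting the V-shape of $g_{i,t}$ around $0$ to make sure that zeroing out negative coordinates can only decrease the norm, and the argument is uniform in $p\in\{1,\infty\}$.
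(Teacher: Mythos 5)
Your proposal is correct and follows essentially the same route as the paper, which proves the lemma in one line by observing that $\sset{\pa{\delta_i^+}_i,~\bdelta\in \cC_t-\vmean{t-1}}\subset\cC_t-\vmean{t-1}$; you have simply spelled out the feasibility check for the positive part $\bdelta^+$ (box constraint, norm constraint via the V-shape of $g_{i,t}$) that the paper leaves implicit. No gap.
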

The lemma holds as $\sset{\pa{\delta_i^+}_i,~\bdelta\in \cC_t-\vmean{t-1}}\subset\cC_t-\vmean{t-1}.$ As a corollary, this set function is non-negative, and non-decreasing.
  It can be written in closed form under additional assumptions, see Proposition~\ref{prop:close} and Example~\ref{ex:close}.

\begin{proposition}\label{prop:close}
Let $A\in \cP([\arms])$, $t\in  \N^\star$, $p=1$. Assume  that for all $ i\in A$, $g_{i,t}$ has a strictly increasing, continuous derivative $g'_{i,t}$ defined on $[0,r-\mean{i,t-1}]$. For $i\in A$, let
\[
f_i(\lambda) \triangleq \left\{
    \begin{array}{ll}
        g_{i,t}'^{-1}(1/\lambda) & \text{if }1/\lambda< g_{i,t}'(r-\mean{i,t-1}),\\
        r-\mean{i,t-1} & \mbox{otherwise,}
    \end{array}
\right.
\]
defined for $\lambda\geq 0$. Then, the smallest  $\lambda^\star$ satisfying
\[\be_{A}\transpose \pa{g_{i,t}\pa{f_i(\lambda^\star)}}_i\leq 1\]
is such that
$$\pa{\delta_i^\star}_i\triangleq\pa{\II{i\in A}f_i(\lambda^\star)}_i\in \argmax_{\bdelta\in \cC_t^+-\vmean{t-1}}\be_A\transpose\bdelta.$$
\end{proposition}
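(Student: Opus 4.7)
The strategy is to solve the convex program $\max_{\bdelta \in \cC_t^+ - \vmean{t-1}} \be_A\transpose \bdelta$ by Lagrangian duality, exploiting the fact that there is a single non-linear constraint $\sum_i g_{i,t}(\delta_i) \leq 1$. First, for $i \notin A$ the objective does not involve $\delta_i$, and since $g_{i,t}(0)=0$ with $g_{i,t}\geq 0$ on the feasible region, setting $\delta_i=0$ minimizes the constraint consumption; hence one may restrict attention to the reduced problem
\[
\max\sum_{i\in A}\delta_i \quad \text{s.t.} \quad \sum_{i\in A} g_{i,t}(\delta_i)\leq 1,\ \delta_i\in [0,r-\mean{i,t-1}].
\]

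Second, I would form the Lagrangian $L(\bdelta,\lambda)=\sum_{i\in A}\delta_i-\lambda\pa{\sum_{i\in A}g_{i,t}(\delta_i)-1}$ with $\lambda\geq 0$. Slater's condition is met (take $\delta_i=\epsilon>0$ small so every inequality becomes strict), so strong duality holds. For fixed $\lambda>0$ the Lagrangian separates across coordinates, and each $\delta_i\mapsto \delta_i-\lambda g_{i,t}(\delta_i)$ is concave with derivative $1-\lambda g'_{i,t}(\delta_i)$. Strict monotonicity and continuity of $g'_{i,t}$ on $[0,r-\mean{i,t-1}]$ yield a unique box-constrained maximizer: the interior stationary point $g_{i,t}'^{-1}(1/\lambda)$ when $1/\lambda<g'_{i,t}(r-\mean{i,t-1})$, and otherwise the right boundary $r-\mean{i,t-1}$. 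This is exactly $f_i(\lambda)$.

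Third, I would verify KKT at the smallest $\lambda^\star\geq 0$ satisfying $h(\lambda^\star):=\sum_{i\in A}g_{i,t}(f_i(\lambda^\star))\leq 1$. Since $1/\lambda$ decreases in $\lambda$ and $g_{i,t}'^{-1}$ is increasing, $\lambda\mapsto f_i(\lambda)$ is non-increasing; continuity at the regime change holds since the two branches agree when $1/\lambda=g'_{i,t}(r-\mean{i,t-1})$. Thus $h$ is continuous and non-increasing in $\lambda$, so the smallest admissible $\lambda^\star$ satisfies either $\lambda^\star=0$ (with $h(0)\leq 1$, so complementary slackness is trivial) or $\lambda^\star>0$ with $h(\lambda^\star)=1$. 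Combined with the stationarity established in the second step, $\pa{\II{i\in A}f_i(\lambda^\star)}_i$ forms a KKT pair with $\lambda^\star$, and strong duality delivers primal optimality.

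The main obstacle will be a clean treatment of boundary cases: ensuring $f_i$ is well-defined everywhere (including at $\lambda=0$ and when $g'_{i,t}(0)>0$, which may force $f_i$ to saturate at $0$ for large $\lambda$) and verifying continuity of $h$ at the interior/boundary transition. Both reduce to the identity $g_{i,t}'^{-1}(g'_{i,t}(r-\mean{i,t-1}))=r-\mean{i,t-1}$ together with one-sided continuity arguments for $g_{i,t}$ at the endpoints of its domain.
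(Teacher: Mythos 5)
Your proposal is correct and follows essentially the same route as the paper's proof: reduce to the coordinates in $A$, recognize $f_i(\lambda)$ as the per-coordinate optimizer of the (partial) Lagrangian, use continuity and monotonicity of $\lambda\mapsto\sum_{i\in A}g_{i,t}(f_i(\lambda))$ to get complementary slackness at the smallest feasible $\lambda^\star$, and conclude by convexity. The only cosmetic difference is that the paper writes out explicit KKT multipliers for the box constraints where you keep them inside the inner maximization.
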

The proof of Proposition~\ref{prop:close} can be found in Appendix~\ref{app:close}.
An important use-case example of Proposition~\ref{prop:close} is the following
\begin{example}Let $A\in \cP([\arms])$, $t\in  \N^\star$.
If for all $i\in [\arms]$, $g_{i,t}=\pa{\cdot}^2\alpha_{i,t}$ for some $\alpha_{i,t}>0$, and $r=\infty,p=1$, then
\[\max_{\bdelta\in \cC_t^+-\vmean{t-1}}\be_A\transpose\bdelta = \sqrt{\be_A\transpose\pa{\frac{1}{\alpha_{i,t}}}_i}\!.\]
Indeed, since the maximizer $\bdelta^\star$ lies at the boundary, $\max_{\bdelta\in \cC_t^+-\vmean{t-1}}\be_A\transpose\bdelta =\max_{\bdelta\in \R_+^\arms,~\sum_i \alpha_{i,t}\delta_i^2=1}\be_A\transpose\bdelta$, and from the first-order optimality condition we deduce that $\be_A=2\lambda^\star\pa{\alpha_{i,t}\delta^\star_i}_i$, i.e., $\delta^\star_i=\II{i\in A}/2\lambda^\star\alpha_{i,t}$, where $\lambda^\star$ is necessarily $\frac{1}{2}\sqrt{\be_A\transpose\pa{1/\alpha_{i,t}}_i}$.
We thus recover the \textsc{ESCB}'s exploration bonus for $\alpha_{i,t}=\counter{i}{t-1}/\log t.$ 
 \label{ex:close}
\end{example}

\begin{remark}
The proof of Proposition~\ref{prop:close} follows the same technique as the proof of
 Theorem~4 by \citet{combes2015combinatorial} for developing the computation of the \textsc{ESCB-KL} exploration bonus. 
\end{remark}

Example~\ref{ex:close} is a specific case where the exploration bonus $A\mapsto \max_{\bdelta\in \cC_t^+-\vmean{t-1}}\be_A\transpose\bdelta$ has a
particularly simple form: It is the square root of a non-decreasing linear set function. Such a set function is known to be submodular \citep{Stobbe2010}. This interesting property helps for maximizing the function~\eqref{A_t}. In Theorem~\ref{thm:submodconf}, we prove that $A\mapsto \max_{\bdelta\in \cC_t^+-\vmean{t-1}}\be_A\transpose\bdelta$ is in fact always submodular.
\newpage

\begin{theorem} The following two properties hold.
\begin{itemize}
 \itemsep0em
\item For $p=\infty$, $A\mapsto\max_{\bdelta\in \cC_t^+-\vmean{t-1}}\be_A\transpose\bdelta$ is linear. 
\item For $p=1$, 
$A\mapsto\max_{\bdelta\in \cC_t^+-\vmean{t-1}}\be_A\transpose\bdelta$ is submodular.
\end{itemize}
\label{thm:submodconf}
\end{theorem}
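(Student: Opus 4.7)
My plan is to handle the two items separately, letting $F(A)\triangleq\max_{\bdelta\in \cC_t^+-\vmean{t-1}}\be_A\transpose\bdelta$. For $p=\infty$, the constraint $\norm{(g_{i,t}(\delta_i))_i}_\infty\leq 1$ is equivalent to $g_{i,t}(\delta_i)\leq 1$ for each $i$ individually, so $\cC_t^+-\vmean{t-1}$ is a Cartesian product of intervals of the form $[0,c_i]$ with $c_i\triangleq\min\{r-\mean{i,t-1},\sup\{\delta:g_{i,t}(\delta)\leq 1\}\}$. Maximizing the linear objective $\be_A\transpose\bdelta$ over this box decouples coordinatewise and returns $F(A)=\sum_{i\in A} c_i$, which is linear in $A$.

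For $p=1$, the plan is to exploit Lagrangian duality on the single non-box constraint. Since $g_{i,t}(0)=0$ and $g_{i,t}\geq 0$, the optimum satisfies $\delta_i=0$ for $i\notin A$, so $F(A)$ is the value of a concave program in the variables $(\delta_i)_{i\in A}$ with a single convex constraint $\sum_{i\in A} g_{i,t}(\delta_i)\leq 1$ plus box constraints. Slater's condition holds trivially at $\bdelta=\bZero$, so strong duality yields
\[F(A)=\min_{\lambda\geq 0}\Bigl[\lambda+\sum_{i\in A}\psi_i(\lambda)\Bigr],\]
where $\psi_i(\lambda)\triangleq\max_{\delta\in[0,r-\mean{i,t-1}]}(\delta-\lambda g_{i,t}(\delta))$ is convex in $\lambda$, non-negative (because $\delta=0$ is feasible and makes the objective zero), and non-increasing in $\lambda$ (because $g_{i,t}\geq 0$ on its domain, so larger $\lambda$ penalizes every candidate $\delta$ more).

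With this dual representation, submodularity reduces to a short algebraic identity. Let $\lambda_A^\star,\lambda_B^\star$ attain the outer minima in $F(A),F(B)$, and assume without loss of generality $\lambda_A^\star\leq\lambda_B^\star$ (else swap $A$ and $B$). Using $\lambda_B^\star$ as a feasible, possibly suboptimal, dual variable for $F(A\cup B)$ and $\lambda_A^\star$ for $F(A\cap B)$ gives the upper bound
\[F(A\cup B)+F(A\cap B)\leq \lambda_A^\star+\lambda_B^\star+\sum_{i\in A\cup B}\psi_i(\lambda_B^\star)+\sum_{i\in A\cap B}\psi_i(\lambda_A^\star).\]
Subtracting the exact value $F(A)+F(B)=\lambda_A^\star+\lambda_B^\star+\sum_{i\in A}\psi_i(\lambda_A^\star)+\sum_{i\in B}\psi_i(\lambda_B^\star)$ and decomposing each index set along the partition $A\cup B=(A\setminus B)\sqcup(B\setminus A)\sqcup(A\cap B)$, every contribution cancels except $\sum_{i\in A\setminus B}[\psi_i(\lambda_B^\star)-\psi_i(\lambda_A^\star)]$, which is non-positive by the monotonicity of $\psi_i$ and the ordering $\lambda_A^\star\leq\lambda_B^\star$. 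Hence $F(A\cup B)+F(A\cap B)\leq F(A)+F(B)$.

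The decisive step is pairing the right multiplier with the right set in the dual bound: assigning $\lambda_B^\star$ to $A\cup B$ and $\lambda_A^\star$ to $A\cap B$ makes the residual telescope with the correct sign, whereas the reverse pairing fails. Everything else is bookkeeping; the only regularity checks are strong duality (which uses the assumptions made in the excerpt on $g_{i,t}$) and the edge case $\counter{i}{t-1}=0$, where $g_{i,t}\equiv 0$ renders $\psi_i$ a constant function of $\lambda$, so it is still non-increasing and the argument remains valid.
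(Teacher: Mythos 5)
Your proof is correct, but it takes a genuinely different route from the paper's. For $p=1$, the paper changes variables $\delta_i'=g_{i,t}(\delta_i)$ so that the feasible set becomes (the restriction to a box of) the polymatroid $\sset{\bdelta'\in\R_+^\arms,\ \be_A\transpose\bdelta'\leq\II{A\neq\emptyset}}$ and the objective becomes the separable concave function $\be_A\transpose\pa{g_{i,t}^{-1}(\delta_i')}_i$; submodularity then follows by invoking Theorem~3 of \citet{He2012}. You instead dualize the single coupling constraint, write $F(A)=\min_{\lambda\geq0}\bigl[\lambda+\sum_{i\in A}\psi_i(\lambda)\bigr]$ with each $\psi_i$ non-increasing, and verify the inequality $F(A\cup B)+F(A\cap B)\leq F(A)+F(B)$ directly by the pairing $\lambda^\star_B\leftrightarrow A\cup B$, $\lambda^\star_A\leftrightarrow A\cap B$; the cancellation you describe is exactly right, and this is the standard mechanism behind submodularity of ``concave of modular'' functions, of which Example~\ref{ex:close} is the special case $\psi_i(\lambda)=1/(4\lambda\alpha_{i,t})$. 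What each approach buys: yours is self-contained (no external polymatroid fact) and makes the dual multiplier's role explicit, which connects nicely to the water-filling computation in Proposition~\ref{prop:close}; the paper's is shorter given the cited result and exposes the polymatroid geometry that the rest of the paper leans on. The only points worth tightening in your write-up are the regularity ones you already flag: attainment of the dual minimizers $\lambda^\star_A,\lambda^\star_B$ (guaranteed under Slater, which holds since $\bdelta=\bZero$ strictly satisfies $\sum_{i\in A}g_{i,t}(\delta_i)\leq1$ and the primal value is finite because $g_{i,t}$ strictly increasing forces $\delta_i\leq g_{i,t}^{-1}(1)$), and the degenerate arms with $\counter{i}{t-1}=0$, where $r=\infty$ makes $\psi_i\equiv+\infty$ and one must either adopt the extended-real convention or note, as the paper implicitly does, that such arms are handled separately.
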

The proof is deferred to Appendix~\ref{app:submodconf} and uses a result on \emph{polymatroids} by \citet{He2012}. 
Theorem~\ref{thm:submodconf} first implies the efficiency of any variant of Algorithm~\ref{algo:standard} with $p=\infty$, since it reduces to optimizing a linear set function over~$\actions$. 
Theorem~\ref{thm:submodconf} also yields that when the reward class is strengthen to target the tighter lower bound ${{\arms\log(T)}/{\Delta}},$  Algorithm~\ref{algo:standard} reduces to maximizing a submodular set function over $\actions$ (the sum of a linear and a submodular function is submodular).
Submodular maximization problems have been applied in machine learning before (see e.g., \citealp{Krause2011,bach2011learning}),
however, maximizing a submodular function~$F$, even for $\actions=\sset{A, \abs{A}\leq m}$ and $F$ non-decreasing, is NP-Hard in general \citep{Schrijver2008}, with an approximation factor of $1+{1}/\pa{e-1}$ by the \textsc{Greedy} algorithm 
\citep{Nemhauser1978}.
This is problematic as the typical analysis is based on controlling with high probability the error $\Delta\pa{A_t}$ at round $t$ by the quantity  $2\max_{\bdelta\in \cC_t-\vmean{t-1}}\be_{A_t}\transpose\pa{\abs{\delta_i}}_i$. 
More precisely, since $\bmu^\star$ belongs with high probability to the confidence region $\cC_t$, $\bmu^\star-\vmean{t-1}$ belongs with high probability to $\cC_t-\vmean{t-1}$. Under this event, and for $\kappa\geq 1$, a $\kappa-$approximation algorithm for maximizing the function~\eqref{A_t} would only guarantee the following:   
\begin{align}\nonumber\Delta\pa{A_t}=~& \pa{\be_{A^\star}-\be_{A_t}}\transpose\bmu^\star\\\leq~&\max_{\bmu\in \cC_t}\be_{A^\star}\transpose\bmu - \be_{A_t}\transpose\bmu^\star\nonumber\\\nonumber
=~&\max_{\bmu\in \cC_t^+}\be_{A^\star}\transpose\bmu - \be_{A_t}\transpose\bmu^\star\\\label{eq:app}
\leq~&\kappa\max_{\bmu\in \cC_t^+}\be_{A_t}\transpose\bmu - \be_{A_t}\transpose\bmu^\star\\\nonumber
=~&\kappa\max_{\bdelta\in \cC_t^+-\vmean{t-1}}\be_{A_t}\transpose\bdelta \\\label{eq:appn}&+ \be_{A_t}\transpose\pa{ \vmean{t-1}-\bmu^\star}+(\kappa-1)\be_{A_t}\transpose\vmean{t-1}
\\\leq~&(\kappa+1)\max_{\bdelta\in \cC_t-\vmean{t-1}}\be_{A_t}\transpose\pa{\abs{\delta_i}}_i\nonumber+(\kappa-1)\be_{A_t}\transpose\vmean{t-1}.
\end{align}
If $\kappa\neq 1$, the term $(\kappa-1)\be_{A_t}\transpose\vmean{t-1}$ gives linear regret bounds. 
In the next section, with a stronger assumption on $\actions$ (but for which submodular maximization is still NP-Hard), we show that both parts of the objective can have different approximation factors.
More precisely, we show how to approximate the linear part with factor $1$, and the submodular part with a constant factor $\kappa >1$. Then, \eqref{eq:app} can be replaced by
$$\kappa\cdot\max_{\bmu\in \cC_t^+-\vmean{t-1}}\be_{A_t}\transpose\bmu+1\cdot\be_{A_t}\transpose\vmean{t-1}  - \be_{A_t}\transpose\bmu^\star.$$
Therefore, in \eqref{eq:appn}, the extra $(\kappa-1)\be_{A_t}\transpose\vmean{t-1}$ term is removed.

\section{Efficient Algorithms for Matroid Constraints}

In this section, we will consider additional structure on $\cA$, using the notion of matroid, recalled below.

\begin{definition}
A matroid is a pair $([\arms],\cI)$, where $\cI$ is a family of subsets of $[\arms]$, called the independent sets, with the following properties:
\label{def:matroid}
\begin{itemize}
\itemsep0em
 \item The empty set is independent, i.e., $\emptyset\in \cI$.
 \item Every subset of an independent set is independent, i.e., for all $ A\in \cI,$ if $A'\subset A$, then $A'\in\cI$.
 \item If $A$ and $B$ are two independent sets, and $\abs{A}>\abs{B}$, then there exists $x\in A\backslash B$ such that $B\cup\sset{x}\in \cI$.
\end{itemize}
\end{definition}
Matroids generalize the notion of linear independence.
A~maximal (for the inclusion) independent
set is called \emph{basis} and all bases have the same cardinality $m$, which is called the \emph{rank} of the matroid \citep{whitney35abstract}. Many combinatorial problems such as building a spanning tree for network 
routing \citep{Oliveira2005} can be expressed as a linear optimization on a matroid (see \citealp{Edmonds1965} or \citealp{Perfect1968}, for other examples). 
 \\[0.025in]
Let $\cI\in \cP([\arms])$ be such that $([\arms],\cI)$ forms a matroid. Let $\cB\subset \cI$
be the set of bases of the matroid $([\arms],\cI)$. 
In the following, we may assume that $\actions$ is either $\cI$ or $\cB$. 
We also assume that an independence oracle is available, i.e., given an input $A\subset[\arms]$, it returns $\textsc{true}$ if $A\in \cI$ and $\textsc{false}$ otherwise.
Maximizing a linear set function $L$ on $\actions$ is efficient, and it can be done as follows \citep{edmonds71matroids}:
Let $\sigma$ be a permutation of $[\arms]$ and $j$ an integer such that $j=m$ in case $\actions=\cB$ and otherwise, $j$ satisfies   \[\ell_1\geq\dots\geq \ell_j\geq 0\geq \ell_{j+1}\geq \dots \geq \ell_\arms,\]
where $\ell_i=L(\sset{\sigma(i)})~\forall i\in [\arms]$.
The optimal $S$ is built greedily 
starting from $S=\emptyset,$ and 
for $i\in [j],$ $\sigma(i)$ is added to $S$
only if $S\cup\sset{\sigma(i)}\in \cI$ .

Matroid bandits with $\actions=\cB$ has been studied by \citet{kveton2014matroid,Talebi2016}. In this case, the two lower bounds in Table~\ref{table:lower} coincide to $\Omega\pa{\arms\log(T)/\Delta}$, and \textsc{CUCB} reaches it, with the following gap-free upper bound: $R_T\pa{\textsc{CUCB}}=\cO\pa{\sqrt{\arms m T\log T}}$.
Assuming sub-Gaussian rewards to use any Algorithm 
of Table~\ref{table:algos} with $p=1$ would tighten \citep{Degenne2016} this gap-free upper bound to $\cO\pa{\sqrt{\arms\log^2 m T\log T}}$. 
Notice, due to the $\sqrt{\log T}$ factor, this does not contradict the $\Omega\pa{\sqrt{\arms m T}}$ gap-free lower bound for multi-play bandits.

In the rest of this section, we provide efficient approximation routines to maximize the function~\eqref{A_t} on $\actions=\cI$ and $\cB$ without having the extra undesirable term $(\kappa-1)\be_{A_t}\transpose\vmean{t-1}$, that a standard $\kappa-$approximation algorithm would suffer. Therefore, using these routines in Algorithm~\ref{algo:standard} do not alter its regret upper bound.

Let
$L$ be a normalized, linear set function, that will correspond to the linear part $A\mapsto\be_A\transpose\vmean{t-1}$; and let $F$ denote a normalized, non-decreasing, submodular function, that will correspond to the submodular part $A\mapsto\max_{\bdelta\in \cC_t^+-\vmean{t-1}}\be_A\transpose\bdelta$. Unless stated otherwise, we further assume that $F$ is positive (for $A\neq\emptyset$). This is a mild assumption as it holds for $A\mapsto\max_{\bdelta\in \cC_t^+-\vmean{t-1}}\be_A\transpose\bdelta$ in the unbounded case, i.e., if $(iii)$ is not assumed and $r=\infty$. If $(iii)$ is true, then adding an extra $\be_A\transpose\pa{\frac{1}{\counter{i}{t-1}^2}}_i$ term will  recover positivity and increase the regret upper bound by only an additive constant. 
In the following subsections, we will provide algorithms that efficiently outputs $S$ such that \begin{align}L(S)+\kappa F(S)\geq L(O)+F(O),\quad\forall O\in \actions ,\label{approx}\end{align}
with some appropriate approximation factor $\kappa \geq 1$.
It is possible to efficiently output $S_1$ and $S_2$ such that we get  $L(S_1)\geq L(O_1)$ and $\kappa F(S_2)\geq F(O_2)$ for any $O_1,O_2\in \actions$. Although we can take $O_1=O_2$, $S_1$ and~$S_2$ are not necessarily equal, and \eqref{approx} is not straightforward. The last subsection is we apply this approach to \emph{budgeted matroid semi-bandits}.   
\subsection{Local Search Algorithm}
In this subsection, we assume that $\actions=\cI$. In Algorithm~\ref{algo:local_search}, we provide a specific  instance of \textsc{LocalSearch} 
that we tailored to our needs to  approximately maximize \mbox{$L+F$}. 

It starts from the greedy solution $S_{\rm init}\in \argmax_{A\in \cI}L(A)$. 
Then, Algorithm~\ref{algo:local_search} repeatedly tries three basic operations in order to improve the current solution. 
Since every $S\in \actions$ can potentially be visited, only \emph{significative} improvements are considered, i.e., improvements greater than $\frac{\varepsilon}{m} F(S)$ for some input parameter $\varepsilon>0$. The smaller $\epsilon$ is, the higher complexity will be. Notice the improvement threshold $\frac{\varepsilon}{m} F(S)$ does not depend on $L$. In fact, this is crucial to ensure that the approximation factor of~$L$ is~$1$. However, this can increase the time complexity. To prevent this increase, the second essential ingredient is the initialization, where only $L$ plays a role.

 In Theorem~\ref{thm:local_search}, we state the approximation guarantees for Algorithm~\ref{algo:local_search} and its time complexity. 
 The proof of Theorem~\ref{thm:local_search} is in Appendix~\ref{app:local_search}.
For $\cC_t$ given by any algorithm of Table~\ref{table:algos}, $F(A)=\max_{\bdelta\in \cC_t^+-\vmean{t-1}}\be_A\transpose\bdelta$, and $\varepsilon=1$, the time complexity is bounded by $\cO\pa{m^2\arms \log\pa{m t}}$, and is thus efficient. 
Another algorithm enjoying an improved time complexity is provided in the next subsection, in the case where $\actions=\cB$.
\begin{theorem}
\label{thm:local_search}
 Algorithm~\ref{algo:local_search} outputs $S\in \cI$ such that
 $$L(S)+2\pa{1+\varepsilon}F(S)\geq L(O)+F(O),\quad \forall O\in \cI.$$
 Its complexity is $\cO\pa{m\arms\log\pa{\frac{\max_{A\in \cI}F(A)}{F(S_0)}}/{\log\pa{1+\frac{\varepsilon}{m}}}}.$
\end{theorem}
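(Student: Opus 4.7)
My strategy is to combine the local optimality conditions at termination with a matroid exchange argument. At termination, $S \in \cI$ is such that none of the three basic operations---add $i \notin S$ with $S\cup\sset{i}\in\cI$, delete $s\in S$, and swap $(i,s)$ with $S\cup\sset{i}\setminus\sset{s}\in\cI$---improves $L+F$ by more than $\frac{\varepsilon}{m}F(S)$. Writing this out yields one scalar inequality per valid local move; e.g. for a swap, $\ell_i - \ell_s + F(S\cup\sset{i}\setminus\sset{s}) - F(S) \leq \frac{\varepsilon}{m}F(S)$, with analogous inequalities for add and delete.

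\textbf{Approximation step.} Fix an arbitrary $O \in \cI$. I would invoke Brualdi's matroid exchange theorem, extended to $|O|\neq |S|$ by pairing the excess with add or delete operations, to construct a matching between $O\setminus S$ and $S\setminus O$ together with leftover adds/deletes. Summing the corresponding local-optimality inequalities, two things happen. First, the linear contributions telescope \emph{exactly} to $L(O)-L(S)$ by modularity of $L$, and since the matching has at most $m$ entries, the threshold terms sum to at most $\varepsilon F(S)$. Second, for the submodular contribution, the standard Fisher--Nemhauser--Wolsey telescoping along the exchange bijection (leveraging submodularity and monotonicity of $F$) gives $\sum \bigl[F(S^{\mathrm{op}})-F(S)\bigr] \geq F(S\cup O) + F(S\cap O) - 2F(S) \geq F(O)-2F(S)$. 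Combining, $L(O)-L(S) + F(O)-2F(S) \leq \varepsilon F(S)$, which rearranges to $L(S) + 2(1+\varepsilon)F(S) \geq L(O)+F(O)$.

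\textbf{Complexity.} Each iteration scans $\cO(\arms m)$ candidate operations (at most $\arms-|S|$ adds, $|S|$ deletes, and $|S|(\arms-|S|)$ swaps), with $\cO(1)$ evaluations of $F$ per candidate. A move is accepted only when it raises $L+F$ by at least $\frac{\varepsilon}{m}F(S)$. Since the algorithm is initialized at $S_{\mathrm{init}}\in\argmax_{A\in\cI}L(A)$---so no pure $L$-improving move is ever available afterward---and the threshold depends only on $F(S)$, every accepted move in fact multiplies $F(S)$ by a factor of at least $1+\frac{\varepsilon}{m}$. Hence the iteration count is at most $\log\!\bigl(\max_{A\in\cI}F(A)/F(S_0)\bigr)/\log(1+\varepsilon/m)$, and multiplying by the per-iteration cost yields the announced complexity.

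\textbf{Main obstacle.} The delicate point is preserving the coefficient $1$ in front of $L(O)$ while $F(O)$ picks up the factor $2(1+\varepsilon)$: naively feeding $L+F$ into a standard submodular local-search proof would inflate \emph{both} terms by the same factor, producing linear regret when plugged into \eqref{eq:appn}. Two ingredients conspire to break the symmetry and must be handled carefully: (i) the improvement threshold involves only $F(S)$, not $L(S)+F(S)$, so the additive slack on the right side of the exchange inequalities scales solely with $F$; and (ii) seeding at an $L$-maximizer means the $L$-telescoping cannot accumulate extra slack. These two choices are precisely what convert the generic $2$-approximation of FNW-style local search into the asymmetric guarantee of the theorem.
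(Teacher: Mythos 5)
Your approximation argument is essentially the paper's: it uses the same exchange mapping (the paper invokes Theorem~1 of Lee et al.\ rather than an ad hoc extension of Brualdi, but the content is identical), sums the local-optimality inequalities, telescopes the linear part exactly, and bounds the submodular part via $F(S\cup O)+F(S\cap O)\geq F(O)$. One small arithmetic slip: the exchange structure generates up to $|O\setminus S|+|S\setminus O|\leq 2m$ local moves (one swap-or-add per $b\in O\setminus S$ \emph{plus} one delete per unmatched $a\in S\setminus O$), so the threshold slack is $2\varepsilon F(S)$, not $\varepsilon F(S)$; this is exactly what produces the constant $2(1+\varepsilon)$ in the theorem, and your intermediate inequality $L(O)-L(S)+F(O)-2F(S)\leq \varepsilon F(S)$ does not rearrange to the bound you state. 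You also omit the degenerate case where the algorithm outputs $\emptyset$ before the loop, which needs the subadditivity observation that $\emptyset$ is then optimal.

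The genuine gap is in the complexity argument. Your claim that ``every accepted move multiplies $F(S)$ by a factor of at least $1+\frac{\varepsilon}{m}$'' does not follow from seeding at an $L$-maximizer. The initialization only guarantees $L(S_\ell)\leq L(S_0)$ \emph{globally}; it does not prevent an individual move from increasing $L$ (the trajectory may first trade $L$ away for $F$ and later recover some of it), and on such a move $F$ can increase by strictly less than $\frac{\varepsilon}{m}F(S)$ — indeed it can decrease. The per-iteration inequality is only $F(S_\ell)>\bigl(1+\frac{\varepsilon}{m}\bigr)F(S_{\ell-1})+L(S_{\ell-1})-L(S_\ell)$, and the multiplicative growth $F(S_\ell)\geq\bigl(1+\frac{\varepsilon}{m}\bigr)^{\ell}F(S_0)$ has to be extracted by unrolling this recursion with weights $\bigl(1+\frac{\varepsilon}{m}\bigr)^{\ell-j}$ and using $L(S_j)\leq L(S_0)$ for every $j$ (an Abel-summation step, as in the paper). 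There is also a prerequisite you skip: when $S_{\rm init}=\emptyset$ the starting point $S_0$ is a singleton, and one must separately argue that $L(S_0)\geq L(S_\ell)$ still holds for all later iterates (the paper does this via a lemma characterizing $S_0$ as the $L$-maximizer among sets with $(L+F)>0$, combined with the fact that $(L+F)(S_\ell)$ is increasing). Without these two repairs the iteration bound, and hence the stated complexity, is not established.
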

Theorem~\ref{thm:local_search} gives  a parameter $\kappa$ arbitrary close to~$2$ in \eqref{approx}.\footnote{We could design a different version of Algorithm~\ref{algo:local_search},  based on \textsc{Non-ObliviousLocalSearch} \citep{Filmus2012} , in order to get $\kappa$ arbitrary close to $1+{1}/\pa{e-1}\CommaBin$ but with a much worst time complexity. Actually,
\citet{Sviridenko2013} proposed such an approach,   
with an approximation factor for $L$ arbitrary close to $1$, but not equal, so we would get back the undesirable term, which would require a complexity polynomial  in $t$ to control. }
\begin{algorithm}[t]
\caption{\textsc{LocalSearch} for maximizing $L+F$ on $\cI$.} \label{algo:local_search}
\begin{algorithmic}
\STATE \textbf{Input}: $L$, $F$, $\cI$, $m$, $\varepsilon>0$.
\STATE \textbf{Initialization}: $S_{\rm init}\in \argmax_{A\in\cI}L(A).$
\IF{$S_{\rm init}= \emptyset$}
\IF{$\exists \sset{x}\in \cI$ such that $(L+F)(\sset{x})>0$}
\STATE $S_0\in \argmax_{\sset{x}\in \cI,~\pa{L+F}\pa{\sset{x}}>0}L\pa{\sset{x}}.$
\ELSE 
\STATE \textbf{Output} $\emptyset$
\ENDIF
\ELSE 
\STATE $S_0\leftarrow S_{\rm init}$
\ENDIF
\STATE $S\leftarrow S_0.$
\STATE Repeatedly perform one
of the following local improvements \textbf{while} possible:
 \\[-0.555in]
\begin{itemize}
\itemsep-0.5em
 \item \textbf{Delete an element:} \\
 \textbf{if}{ $\exists x\in S$ such that\\ $\pa{L+F}\pa{S\backslash\sset{x}}>\pa{L+F}\pa{S} +\frac{\varepsilon}{m} F(S),$}
\\\textbf{then} $S\leftarrow S\backslash\sset{x}.$\\\textbf{end if}
  \item \textbf{Add an element:} \\
 \textbf{if} {$\exists y\in [\arms]\backslash S,~S\cup\sset{y}\in \cI,$ such that \\$\pa{L+F}\pa{S\cup\sset{y}}>\pa{L+F}\pa{S} +\frac{\varepsilon}{m} F(S),$}\\
 \textbf{then} $S\leftarrow S\cup\sset{y}.$\\\textbf{end if}
 \item \textbf{Swap
a pair of elements:} \\
 \textbf{if} {$\exists (x,y)\in S\times[\arms]\backslash S,~S\backslash\sset{x}\cup\sset{y}\in \cI,$ such that 
 $\pa{L+F}\pa{S\backslash\sset{x}\cup\sset{y}}>\pa{L+F}\pa{S} +\frac{\varepsilon}{m} F(S)$}
 \textbf{then} $S\leftarrow S\backslash\sset{x}\cup\sset{y}$
 \textbf{end if}
 \\[-0.055in]
\end{itemize}
\STATE \textbf{end while}
\\[-0.025in]
\STATE \textbf{Output}: $S$.
\end{algorithmic}
\end{algorithm}

\subsection{Greedy Algorithm}
In this section, we assume that $\actions=\cB$. This  situation happens, for instance, under a non-negativity assumption on $L$, i.e., if we consider non-negative rewards $X_i$.    
We show that the standard \textsc{Greedy} algorithm (Algorithm~\ref{algo:greedy}) improves over Algorithm~\ref{algo:local_search} by exploiting this extra constraint, both in terms of the running time and the approximation factor. We state the  result in Theorem~\ref{thm:greedy} and prove it in Appendix~\ref{app:greedy}. Notice that another advantage is that we do not need to assume 
$F(A)>0$ for $A\neq \emptyset$ here.
\begin{algorithm}[t]
\caption{\textsc{Greedy} for maximizing $L+F$ on $\cB$.} \label{algo:greedy}
\begin{algorithmic}
\STATE \textbf{Input}: $L$, $F$, $\cI$, $m$.
\STATE \textbf{Initialization}: $S\leftarrow \emptyset.$
\FOR{$i\in [k]$}
\STATE $x\in\argmax_{x\notin S,~S\cup\sset{x}\in \cI}\pa{L+F}(S\cup\sset{x}).$
\STATE $S\leftarrow S\cup\sset{x}.$
\ENDFOR 
\STATE \textbf{Output}: $S$.
\end{algorithmic}
\end{algorithm}

\begin{theorem}\label{thm:greedy}
Algorithm~\ref{algo:greedy} outputs $S\in \cB$ such that $$L(S)+2F(S)\geq L(O)+F(O),\quad \forall O\in \cB.$$
Its complexity is $\cO\pa{m\arms}$.
\end{theorem}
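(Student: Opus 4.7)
My plan is to combine Brualdi's symmetric matroid exchange theorem with the greedy selection rule and the submodularity of $F$, in a manner parallel to the classical Nemhauser--Wolsey--Fisher analysis of greedy over a matroid, but enriched so as to treat the linear part $L$ with approximation factor exactly one (instead of the generic factor two).

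Let $S=\{s_1,\dots,s_m\}$ denote the greedy output indexed in the order of selection, and set $S_i=\{s_1,\dots,s_i\}$, with $S_0=\emptyset$. Fix any $O\in\cB$. By Brualdi's symmetric exchange theorem, there exists a bijection $\pi\colon S\to O$ such that $(S\setminus\{s_i\})\cup\{\pi(s_i)\}\in\cB$ for every $i$. Writing $o_i=\pi(s_i)$, the set $S_{i-1}\cup\{o_i\}$ is contained in that basis and hence independent, so $o_i$ is a feasible candidate at step $i$. The greedy rule therefore yields $L(s_i)+F(S_i)-F(S_{i-1})\geq L(o_i)+F(S_{i-1}\cup\{o_i\})-F(S_{i-1})$. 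Summing over $i\in[m]$ and telescoping the left-hand marginal gains produces the key intermediate inequality
\[
L(S)+F(S)\;\geq\;L(O)+\sum_{i=1}^m\bigl[F(S_{i-1}\cup\{o_i\})-F(S_{i-1})\bigr].
\]

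To lower-bound the remaining sum, I would apply submodularity twice. First, since $S_{i-1}\subseteq S$, the marginal-decrease form of submodularity gives $F(S_{i-1}\cup\{o_i\})-F(S_{i-1})\geq F(S\cup\{o_i\})-F(S)$ for each $i$. Second, since $\{o_i\}_{i=1}^m=O$, the standard one-element-at-a-time subadditivity of marginals on $S$ yields $\sum_i\bigl[F(S\cup\{o_i\})-F(S)\bigr]\geq F(S\cup O)-F(S)\geq F(O)-F(S)$, where the last step uses monotonicity and the contributions of indices with $o_i\in S$ simply vanish. Substituting gives $L(S)+F(S)\geq L(O)+F(O)-F(S)$, i.e., $L(S)+2F(S)\geq L(O)+F(O)$, which is the claimed bound.

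For the complexity, the outer loop runs $m$ times and each iteration scans the $\arms$ candidates $x\in[\arms]\setminus S$, using one independence-oracle call per candidate, for a total of $\cO(m\arms)$. The only subtle ingredient in the whole argument is the use of Brualdi's symmetric exchange, which ensures that at every step $s_i$ is benchmarked against a feasible element of $O$ through a \emph{bijective} matching; the rest of the proof is simply one invocation of the greedy inequality followed by two invocations of submodularity.
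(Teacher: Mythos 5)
Your proof is correct and follows essentially the same route as the paper's: Brualdi's exchange bijection to benchmark each greedy pick $s_i$ against a feasible $o_i=\pi(s_i)$, the bijectivity of $\pi$ to handle the linear part with factor one, and two applications of submodularity to lower-bound $\sum_i\bigl(F(S_{i-1}\cup\{o_i\})-F(S_{i-1})\bigr)$ by $F(S\cup O)-F(S)\geq F(O)-F(S)$. The only difference is presentational --- the paper packages the submodularity step as a separate proposition bounding $\sum_i\bigl(F(S_i)-F(S_{i-1}\cup\{\beta(s_i)\})\bigr)$ from above by $2F(S)-F(S\cup O)-F(\emptyset)$, which is algebraically equivalent to your telescoped lower bound.
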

Combining the results before, we get the following theorem.
\begin{theorem} With approximation techniques, the cumulative regret 
for the combinatorial semi-bandits is bounded as
\[R_T\leq\cO\pa{\sqrt{\arms\log^2(m)T\log T}}\] with 
per-round time complexity of order $\cO(\log(mt)m^2n)$ (resp., $\cO(mn)$) for $\actions=\cI$ (resp., $\actions=\cB$).
\end{theorem}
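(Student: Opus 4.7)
The plan is to instantiate Algorithm~\ref{algo:standard} with one of the $p=1$ variants from Table~\ref{table:algos} (\textsc{ESCB} or \textsc{OLS-UCB}), which is already known to match the sub-Gaussian lower bound with regret $\cO\pa{\sqrt{\arms\log^2(m)T\log T}}$ when the maximizer $A_t$ is computed exactly \citep{Degenne2016}. The sole source of inefficiency is the set-function maximization~\eqref{A_t}, which by the discussion preceding Theorem~\ref{thm:submodconf} decomposes as $L_t(A)+F_t(A)$ with $L_t(A)=\be_A\transpose\vmean{t-1}$ linear and $F_t(A)=\max_{\bdelta\in \cC_t^+-\vmean{t-1}}\be_A\transpose\bdelta$ non-decreasing submodular. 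I would replace this exact maximization by a call to \textsc{LocalSearch} (Algorithm~\ref{algo:local_search}) when $\actions=\cI$ and to \textsc{Greedy} (Algorithm~\ref{algo:greedy}) when $\actions=\cB$, and show that the resulting algorithm retains the rate and runs efficiently.

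The key observation making the regret bound preserved is the refinement of inequalities \eqref{eq:app}--\eqref{eq:appn} already noted at the end of Section~3: if the routine outputs $A_t$ with $L_t(A_t)+\kappa F_t(A_t)\geq L_t(O)+F_t(O)$ for every $O\in \actions$ and some constant $\kappa$, then on the high-probability event $\sset{\bmu^\star\in \cC_t}$,
\begin{align*}
\Delta(A_t)&\leq \max_{\bmu\in \cC_t^+}\be_{A^\star}\transpose\bmu-\be_{A_t}\transpose\bmu^\star\\
&\leq \kappa\, F_t(A_t) + \be_{A_t}\transpose\pa{\vmean{t-1}-\bmu^\star}\\
&\leq (\kappa+1)\max_{\bdelta\in \cC_t-\vmean{t-1}}\be_{A_t}\transpose\pa{\abs{\delta_i}}_i,
\end{align*}
with no residual $(\kappa-1)\be_{A_t}\transpose\vmean{t-1}$ term because $L_t$ is matched exactly. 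Plugging $\kappa=2(1+\varepsilon)$ (Theorem~\ref{thm:local_search}) or $\kappa=2$ (Theorem~\ref{thm:greedy}) into the existing analysis of \textsc{ESCB}/\textsc{OLS-UCB} introduces only a constant multiplicative factor, and hence yields the advertised $\cO\pa{\sqrt{\arms\log^2(m)T\log T}}$ rate.

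For the per-round complexity, Theorem~\ref{thm:greedy} directly gives $\cO(m\arms)$ when $\actions=\cB$. For $\actions=\cI$, Theorem~\ref{thm:local_search} with $\varepsilon=1$ yields $\cO\!\pa{m\arms\log\pa{\max_{A\in\cI}F_t(A)/F_t(S_0)}/\log(1+1/m)}=\cO\pa{m^2\arms\log\pa{\max_{A\in\cI}F_t(A)/F_t(S_0)}}$; using the closed form of Example~\ref{ex:close} with $\alpha_{i,t}=\counter{i}{t-1}/\log t$, both $\max_{A\in\cI}F_t(A)$ and $1/F_t(S_0)$ are polynomial in $m$ and $t$, so the logarithmic factor is $\cO(\log(mt))$, giving the claimed $\cO(\log(mt)m^2\arms)$ per round. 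In the bounded case where positivity of $F_t$ is not automatic, one adds the auxiliary $\be_A\transpose\pa{1/\counter{i}{t-1}^2}_i$ term as mentioned just before Section~4.1, which preserves positivity and contributes only an additive constant to $R_T$.

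The main obstacle is ensuring that the approximation loss enters the regret analysis only multiplicatively through $F_t$ and not through $L_t$: a generic $\kappa$-approximation of $L_t+F_t$ would carry a linear-in-$T$ additive term, as pointed out after~\eqref{eq:appn}. This is precisely what the tailored initialization of \textsc{LocalSearch} at the linear maximizer $\argmax_{A\in\cI}L_t(A)$ and the $L_t$-independent improvement threshold $\tfrac{\varepsilon}{m}F_t(S)$ are designed to achieve, and it is also the reason \textsc{Greedy} must be analyzed specifically on bases; once this stronger guarantee is in hand, the remainder reduces to invoking the regret proof of \citet{Degenne2016} with $\kappa=O(1)$ on the exploration bonus.
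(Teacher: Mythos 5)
Your proposal is correct and follows essentially the same route as the paper: it combines the approximation guarantees of Theorems~\ref{thm:local_search} and~\ref{thm:greedy} with the refined regret decomposition at the end of Section~3 (exact factor $1$ on the linear part, constant factor $\kappa$ on the submodular bonus), so that the $(\kappa-1)\be_{A_t}\transpose\vmean{t-1}$ term vanishes and the \textsc{ESCB}/\textsc{OLS-UCB} analysis of \citet{Degenne2016} goes through with only a constant inflation of the exploration bonus. The complexity accounting ($\cO(m\arms)$ for $\cB$, and $\cO(m^2\arms\log(mt))$ for $\cI$ via $\varepsilon=1$ and the polynomial bounds on $\max_{A\in\cI}F_t(A)/F_t(S_0)$) likewise matches the paper's argument.
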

Notice that this new bound is better by a factor $\sqrt{m}/\log m$ than the one of \citet{kveton2014matroid} in the case $\actions=\cB$.

\subsection{Budgeted Matroid Semi-Bandits}
In this subsection, we extend results of the two previous subsections to budgeted matroid semi-bandits.
In budgeted bandits with single resource and infinite horizon \citep{ding2013multi-armed,pmlr-v45-Xia15}, each arm is associated with both a reward
and  a  cost. The agent aims at maximizing the cumulative reward under a budget constraint for the cumulative costs. \citet{xia2016budgeted} studied budgeted bandits with multiple play, where a $m$-subset $A$ of arms is selected at each round. 
An optimal (up to a constant term) offline algorithm chooses the same action $A^\star$ within each round, where $A^\star$ is the minimizer of the ratio ``expected cost paid choosing $A$'' over ``expected reward gained choosing $A$''.
In the setting of \citet{xia2016budgeted}, the agent observes the \emph{partial} random cost and reward of each arm in $A$ (i.e., semi-bandit feedback), pays the sum of partial costs of $A$ and gains the sum of partial rewards of $A$. $A^\star$ can be computed efficiently, and a \citet{xia2016budgeted} give an algorithm based on \textsc{CUCB}. It minimizes the ratio where the averages are replaced by UCBs.
We extend this setting to matroid constraints. 
We assume that total costs/rewards are non-negative linear set functions of the chosen action $A$.
The objective is to minimize a ratio of linear set functions. 
As previously, two kinds of constraints can be considered for the minimization: either $\actions=\cI$ or $\actions=\cB$.
Theorem~\ref{thm:submodconf} implies that an optimistic estimation of this ratio is of the form
$\frac{L_1-F_1}{L_2+F_2}\CommaBin$
where for $i\in \sset{1,2}$, $F_i$ are positive (except for $\emptyset$), normalized, non-decreasing, submodular; and $L_i$ are non-negative and linear. 
$L_1-F_1$ is a high-probability lower bound on the expected cost paid, and $L_2+F_2$ is a high-probability upper bound on the expected reward gained.
Notice that the numerator, $L_1-F_1$, can be negative, which can be an incitement to take arms with a high cost/low rewards. Therefore, we consider the minimization of the surrogate
$\pa{\frac{L_1-F_1}{L_2+F_2}}^{\!\!+}\!\!.$
Indeed, $\pa{L_1-F_1}/\pa{L_2+F_2}$ is a high probability lower bound on the ratio of expectation, so by monotonicity of $x\mapsto x^+$ on $\R$, ${\pa{L_1-F_1}^+\!/\pa{L_2+F_2}}$\! is also a high-probability lower bound.
We assume $L_2$ is normalized, but not necessarily $L_1$. $L_1(\emptyset)$ can be seen as an entry price.
When $L_1$ is normalized, we assume that $\emptyset$ is not feasible.
\begin{remark}
 Notice, If $\actions=\cI$, and $L_1$ is normalized, then there is an optimal solution of the form $\sset{s}\in \cI$:
If $L_1-F_1$ is negative for some $S=\sset{s}\subset \cI$, then such $S$ is a minimizer. Otherwise, by submodularity (and thus subadditivity, since we consider normalized functions), $L_1-F_1$ is non-negative, and we have \begin{align*}\frac{L_1(S)-F_1(S)}{L_2(S)+F_2(S)}&\geq \frac{\sum_{s\in S}L_1(\sset{s})-F_1(\sset{s})}{\sum_{s\in S}L_2(\sset{s})+F_2(\sset{s})}\\&\geq \min_{s\in S}\frac{L_1(\sset{s})-F_1(\sset{s})}{L_2(\sset{s})+F_2(\sset{s})}\cdot\end{align*}
\end{remark}

This minimization problem is at least as difficult as previous submodular maximization problems, taking $L_1=1$ and $F_1=0$.  
In order to use our approximation algorithms, we consider the \emph{Lagrangian function} associated to the problem (see e.g., \citealp{fujishige2005submodular}),
$$\cL(\lambda,S)\triangleq{L_1(S)-F_1(S)} - \lambda\pa{L_2(S)+F_2(S)},$$
for $\lambda\geq 0$ and $S\subset[\arms].$
For a fixed $\lambda\geq 0,$ $-\cL(\lambda,\cdot)$ is a sum of a linear and a submodular function, and both Algorithms~\ref{algo:local_search} and~\ref{algo:greedy} can be used. However, the first step is to find $\lambda$ sufficiently close to the optimal ratio \[\lambda^\star=\min_{A\in\actions}\pa{\frac{L_1(A)-F_1(A)}{L_2(A)+F_2(A)}}^{\!\!+}\!\cdot\] 
\begin{remark}\label{rk:Lagrangian} For some $\lambda \geq 0,$
$$\min_{A\in\actions} \cL(\lambda,A)\geq 0\imp\lambda\leq \lambda^\star,$$ 
\[
  \min_{A\in\actions} \cL(\lambda,A)\leq 0\imp\left\{
     \begin{array}{c}\lambda\geq \lambda^\star,\text{ or}\\
       \min_{A\in\actions} {L_1(A)-F_1(A)}\leq 0,\\ \text{which further gives }\lambda^\star=0. 
     \end{array}
   \right.
\]
\end{remark}
From Remark~\ref{rk:Lagrangian}, if it was possible to compute $\min_{A\in\actions} \cL(\lambda,A)$ exactly, then a binary search algorithm would find $\lambda^\star$. This dichotomy method can be extended to $\kappa-$approximation algorithms by defining the \emph{approximation Lagrangian} as 
\[\cL_{\kappa}(\lambda,S)\triangleq L_1(S)-\kappa F_1(S) - \lambda\pa{L_2(S)+\kappa F_2(S)},\]
for $\lambda\geq 0$ and $S\subset[\arms].$
The idea is to use the following approximation guarantee for a $\kappa-$approximation algorithms  outputing $S$ (with objective function $-\cL$),
\[\min_{A\in\actions}\cL_{\kappa}(\lambda,A)\leq\cL_{\kappa}(\lambda,S)\leq \min_{A\in\actions} \cL(\lambda,A).\]
Thus, for a given $\lambda$, either the l.h.s is strictly negative or the r.h.s is non-negative, depending on the sign of $\cL_{\kappa}(\lambda,S).$ 

Therefore, from Remark~\ref{rk:Lagrangian}, a lower bound $\lambda_1$ on $\lambda^\star$, and an upper bound $\lambda_2$ on $\min_{A\in\actions}\pa{\frac{L_1(A)-\kappa F_1(A)}{L_2(A)+\kappa F_2(A)}}^+$ can be computed.
The detailed method is given in Algorithm~\ref{algo:ratio}.  
Notice that it takes as input some $\textsc{Algo}_{\kappa}$, that can be either Algorithm~\ref{algo:local_search} or Algorithm~\ref{algo:greedy}, depending on the assumption on the constraint (either $\actions=\cI$ or $\actions=\cB$). We denote the output as $\textsc{Algo}_{\kappa}(L+F)$, for some linear set function $L$ and some submodular set function $F$, for maximizing the objective $L+F$ on $\actions$, so that $S=\textsc{Algo}_{\kappa}(L+F)$ satisfies $L(S)+\kappa F(S)\geq \max_{A\in\actions} L(A)+F(A)$, i.e., $\kappa=2(1+\varepsilon)$ if $\textsc{Algo}_{\kappa}= $ Algorithm~\ref{algo:local_search}, $\actions=\cI$ and $\kappa=2$ if $\textsc{Algo}_{\kappa}= $ Algorithm~\ref{algo:greedy}, $\actions=\cB$.
\begin{algorithm}[t]
\caption{Binary search for minimizing the ratio ${\pa{L_1-F_1}^+/\pa{L_2+F_2}}$.} \label{algo:ratio}
\begin{algorithmic}
\STATE \textbf{Input}: $L_1,L_2$, $F_1,F_2$,  $\textsc{Algo}_{\kappa}$, $\eta >0$.
\STATE $\delta\leftarrow \frac{\eta \min_{\sset{s}\in\actions} F_1(\sset{s})}{L_2(B)+\kappa^2 F_2(B)}$ with $B=\textsc{Algo}_{\kappa}(L_2+\kappa F_2).$
\STATE $A\leftarrow A_0\in  \actions\backslash \sset{\emptyset}$ arbitrary.
\IF{ $\cL_\kappa(0,A)> 0$}
\STATE $\lambda_1\leftarrow0,\quad \lambda_2\leftarrow\frac{L_1(A)-F_1(A)}{L_2(A)+F_2(A)}\cdot$
\WHILE{${\lambda_2-\lambda_1}\geq \delta$}
\STATE $\lambda\leftarrow\frac{\lambda_1+\lambda_2}{2}\cdot$  
\STATE $S\leftarrow\textsc{Algo}_{\kappa}(-\cL(\lambda,\cdot)).$
\IF{$\cL_\kappa(\lambda,S)\geq 0$}
\STATE $\lambda_1\leftarrow\lambda.$ 
\ELSE
\STATE $\lambda_2\leftarrow\lambda.$
\STATE $A\leftarrow S.$
\ENDIF
\ENDWHILE
\ENDIF
\STATE \textbf{Output}: $A$.
\end{algorithmic}
\end{algorithm}
In Theorem~\ref{thm:ratio}, we state the result for the output of Algorithm~\ref{algo:ratio} and prove it in Appendix~\ref{app:ratio}. 
\begin{theorem}
\label{thm:ratio}
Algorithm~\ref{algo:ratio} outputs $A$ such that
$$\pa{\frac{L_1(A)-(\kappa+\eta) F_1(A)}{L_2(A)+\kappa F_2(A)}}^{\!\!+}\!\!\leq\lambda^\star,$$
where $\lambda^\star$ is the minimum of $\pa{\frac{L_1-F_1}{L_2+F_2}}^{\!\!+}$ over $\cI$ if $\textsc{Algo}_{\kappa}= $ Algorithm~\ref{algo:local_search}, and over $\cB$ if $\textsc{Algo}_{\kappa}= $ Algorithm~\ref{algo:greedy}. 
For $\cC_t$ given by any algorithm of Table~\ref{table:algos}, $F(A)=\max_{\bdelta\in \cC_t^+-\vmean{t-1}}\be_A\transpose\bdelta$,
the complexity is of order $\log(m t/\eta)$ times the complexity of $\textsc{Algo}_{\kappa}.$
\end{theorem}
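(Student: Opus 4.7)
The approach is a Dinkelbach-style binary search on the Lagrangian multiplier $\lambda$, adapted to the $\kappa$-approximate setting via $\cL_\kappa$. First I dispose of the degenerate case where the outer test $\cL_\kappa(0, A_0) > 0$ fails: then $L_1(A_0) \leq \kappa F_1(A_0)$, so $L_1(A_0) - (\kappa + \eta) F_1(A_0) < 0$ (using $F_1 > 0$ on non-empty sets and $A_0 \in \actions \setminus \{\emptyset\}$), and the positive-part in the claim vanishes. Assume from now on we enter the main loop.

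I maintain two invariants throughout the loop: (a) $\lambda_1 \leq \lambda^\star$, and (b) $\cL_\kappa(\lambda_2, A) \leq 0$ for the currently stored $A$. Invariant (a) follows from the inequality $\cL_\kappa(\lambda, S) \leq \min_{A' \in \actions} \cL(\lambda, A')$, which is a direct rearrangement of the $\kappa$-approximation guarantee of $\textsc{Algo}_\kappa$ applied to $-\cL(\lambda, \cdot)$, combined with Remark~\ref{rk:Lagrangian}: whenever the if-branch updates $\lambda_1 \leftarrow \lambda$, the triggering condition $\cL_\kappa(\lambda, S) \geq 0$ forces $\min_{A'} \cL(\lambda, A') \geq 0$, hence $\lambda \leq \lambda^\star$. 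Invariant (b) holds at initialization by a short algebraic computation (plugging $\lambda_2^{(0)} = (L_1(A_0)-F_1(A_0))/(L_2(A_0)+F_2(A_0))$ gives $\cL_\kappa(\lambda_2^{(0)}, A_0) \propto (1-\kappa)(L_1(A_0)F_2(A_0)+F_1(A_0)L_2(A_0)) \leq 0$ since $\kappa \geq 1$) and is preserved by the else-branch, which updates $A$ and $\lambda_2$ simultaneously precisely when $\cL_\kappa(\lambda, S) < 0$.

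At termination, $\lambda_2 - \lambda_1 < \delta$; combining the two invariants with positivity of $L_2(A) + \kappa F_2(A)$ yields
\[
\frac{L_1(A) - \kappa F_1(A)}{L_2(A) + \kappa F_2(A)} \leq \lambda_2 \leq \lambda^\star + \delta.
\]
Writing
\[
\frac{L_1(A) - (\kappa + \eta) F_1(A)}{L_2(A) + \kappa F_2(A)} = \frac{L_1(A) - \kappa F_1(A)}{L_2(A) + \kappa F_2(A)} - \frac{\eta F_1(A)}{L_2(A) + \kappa F_2(A)},
\]
it remains to show the correction dominates $\delta$. Monotonicity of $F_1$ yields $F_1(A) \geq \min_{\{s\} \in \actions} F_1(\{s\})$, and the $\kappa$-approximation guarantee of $\textsc{Algo}_\kappa$ applied to $L_2 + \kappa F_2$ yields $L_2(A) + \kappa F_2(A) \leq L_2(B) + \kappa^2 F_2(B)$ uniformly over $A \in \actions$. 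Substituting the definition of $\delta$ then gives $\eta F_1(A)/(L_2(A) + \kappa F_2(A)) \geq \delta$, so the target ratio is at most $\lambda^\star$, which the positive part preserves since $\lambda^\star \geq 0$.

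For the complexity, each iteration calls $\textsc{Algo}_\kappa$ once and halves $\lambda_2 - \lambda_1$, so the loop runs $\lceil \log_2(\lambda_2^{(0)}/\delta) \rceil$ times. For $F$ taken from Table~\ref{table:algos}, each of $L_i(\cdot)$, $F_i(\cdot)$ and $1/\min_{\{s\}} F_1(\{s\})$ is polynomially bounded in $m, \arms, t$, giving the announced $O(\log(mt/\eta))$ factor. The main subtlety is the balancing step just above: one cannot bound $L_2(A) + \kappa F_2(A)$ merely by its value at the exact optimum, since $A$ is only approximate---this is precisely why the reference point $B$, which supplies a uniform upper bound over all of $\actions$, is introduced in the definition of $\delta$.
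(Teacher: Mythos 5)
Your proof is correct and takes essentially the same route as the paper's: the same approximation-Lagrangian $\cL_{\kappa}$, the same use of Remark~\ref{rk:Lagrangian} via the guarantee $\cL_\kappa(\lambda,S)\le\min_{A'}\cL(\lambda,A')$ to keep $\lambda_1\le\lambda^\star$, and the same role for $\delta$ and the reference solution $B$ in absorbing the binary-search error into the $\eta F_1$ slack. The only (equivalent) cosmetic differences are that you phrase the argument as loop invariants plus ratio arithmetic where the paper writes a single chain of inequalities on $\cL_{\kappa+\eta,\kappa}$, and you verify the initial bound $\cL_\kappa(\lambda_2^{(0)},A_0)\le 0$ by a direct algebraic identity rather than by monotonicity in $\kappa$.
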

\section{Experiments}
We provide experiments for a \emph{graphic matroid}, on a five nodes complete graph, as did \citet{combes2015combinatorial}. We thus have $\arms=10$, $m=4$. We consider two experiments. In the first one we use 
$\actions=\cB$, $\mu^\star_i=1+\Delta\II{i\leq m},$ for all $i\in [\arms],$ and in the second, $\actions=\cI$, where we set $\mu^\star_i=\Delta(2\II{i\leq m-1}-1),$ $\forall i\in [\arms]$. We take $\Delta=0.1$, with rewards drawn from independent unit variance Gaussian distributions.  
Figure~\ref{exp:1} illustrates the comparison between \textsc{CUCB} and our implementations of $\textsc{ESCB}$ \citep{combes2015combinatorial} using Algorithm~\ref{algo:greedy} (left) and \ref{algo:local_search} (right, with $\varepsilon=0.1$), showing the behavior of the regret vs.\,time horizon. We observe that although we are approximating the confidence region within a factor at least $2$ (and thus force the exploration), our efficient implementation outperforms $\textsc{CUCB}$. Indeed, we take advantage (gaining a factor $\sqrt{m}/\log m$) of the previously inefficient algorithm (that we made efficient) to use reward independence, which the more conservative $\textsc{CUCB}$ is not able to. The latter algorithm has still a better per round-time complexity of $\cO\pa{\arms\log m}$ and may be more practical on larger instances. 

\begin{figure}[H]
\vskip -0.12in
\centering

\resizebox{\columnwidth}{!}
{\includegraphics{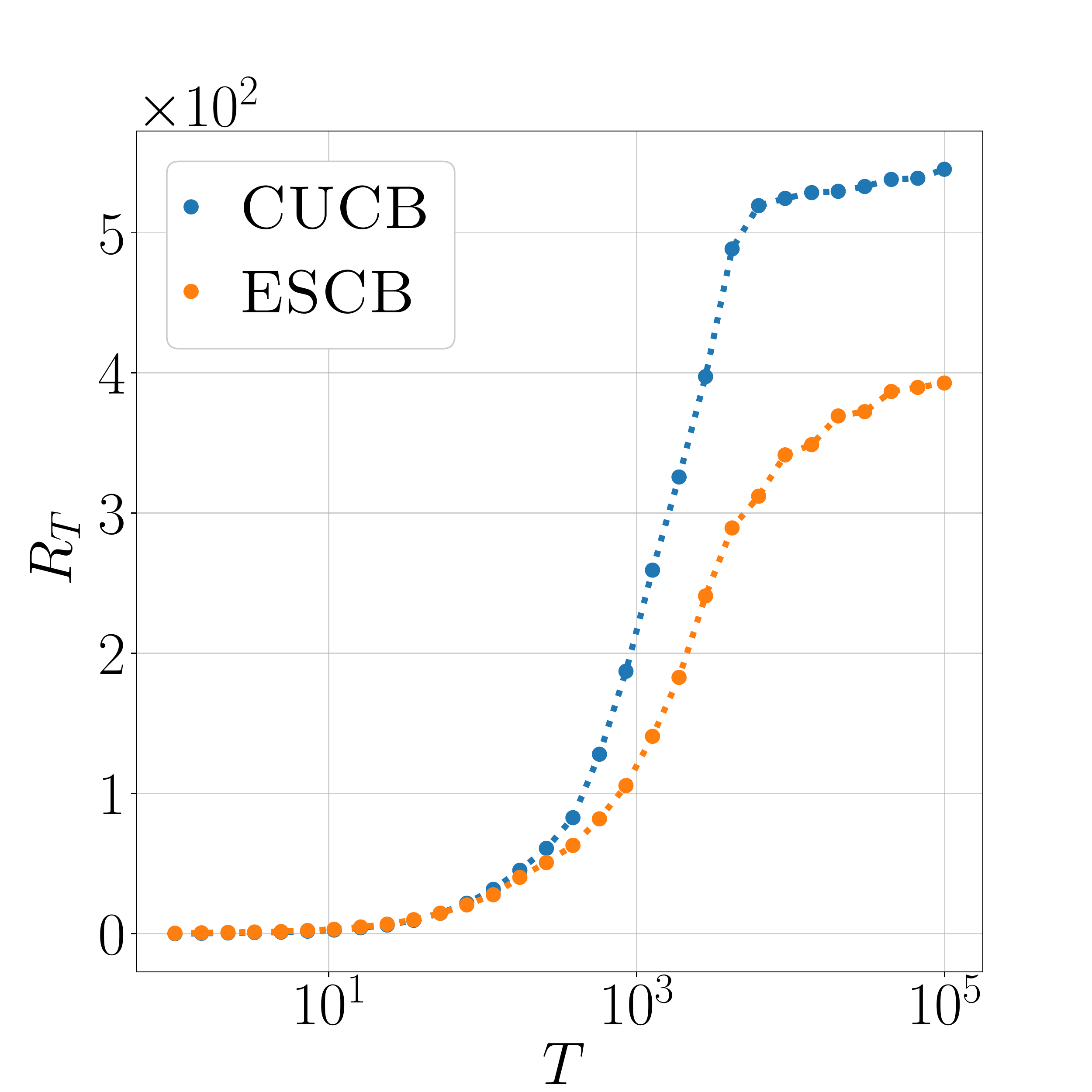}\includegraphics{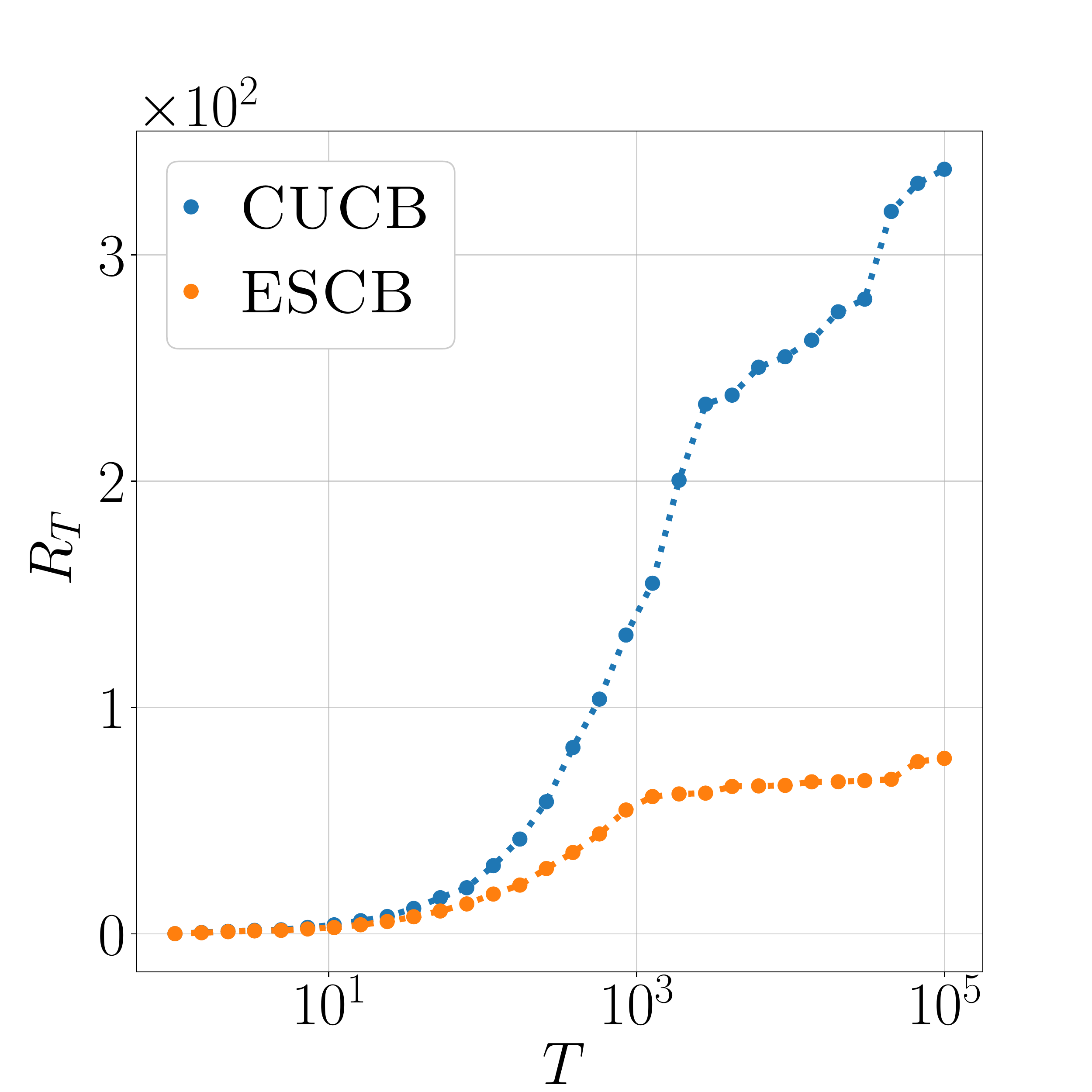}
}
\vskip -0.1in
\caption{Cumulative regret for the minimum spanning tree setting in up to $10^5$
rounds, averaged over 100 independent simulations. \textbf{Left}: for $\actions=\cB$. \textbf{Right}: for $\actions=\cI$.}
\vskip -0.1in
\label{exp:1}
\end{figure}

\section{Discussion}
In this paper, we gave several approximation schemes 
for the confidence regions and applied them to combinatorial semi-bandits
with matroid constraints and their budgeted version. 
We believe our approximation methods can be extended to approximation regret for
non-linear objective functions (e.g., for influence maximization, \citealp{Wang2018}), if the maximization algorithm
keeps the same approximation factor for the objective, either with or without the bonus.
\clearpage

\paragraph{Acknowledgments}
Vianney Perchet has benefited from the support of the ANR (grant n.ANR-13-JS01-0004-01), of the FMJH Program Gaspard Monge in optimization and operations research (supported in part by EDF), from the Labex LMH and from the CNRS through the PEPS program. The research presented was also supported by European CHIST-ERA project DELTA, French Ministry of
Higher Education and Research, Nord-Pas-de-Calais Regional Council,
Inria and Otto-von-Guericke-Universit\"at Magdeburg associated-team north-European project Allocate, and French National Research Agency project BoB (grant n.ANR-16-CE23-0003), 
FMJH Program PGMO with the support to this program from Criteo.

\bibliography{library,example}
\bibliographystyle{icml2019}

\newpage
\onecolumn
\appendix
\section{Proof of Proposition~\ref{prop:close}}
\label{app:close}
\begin{proof} It suffices to maximize on the coordinates of $\bdelta$ belonging to $A$ (the others being zero).
For all $i\in A$, we let \begin{align*}\eta_i^\star&\triangleq\pa{1-\lambda^\star g_{i,t}'(r-\mean{i,t-1})}\II{\delta^\star_i= r-\mean{i,t-1}}\\                                \gamma_i^\star&\triangleq\pa{\lambda^\star g_{i,t}'(0)-1}\II{\delta^\star_i= 0}=-\II{\delta^\star_i= 0}.
\end{align*}
For all $i\in A$, the function $f_i$ is continuous, non-increasing on $\R_+$, hence so is $\lambda\mapsto \be_{A}\transpose \pa{g_{i,t}\pa{f_i(\lambda)}}_i$. 
If $\be_{A}\transpose \pa{g_{i,t}\pa{f_i(\lambda^\star)}}_i< 1$, then necessarily $\lambda^\star=0.$  Thus, the following KKT conditions are satisfied:
 \begin{align*} \lambda^\star\pa{\sum_{i\in A}g_{i,t}(\delta_i^\star)-1}&=0,\text{ and}\\\forall i\in A,~\lambda^\star g_{i,t}'(\delta_i^\star)+\eta^\star_i-\gamma_i^\star&=1,
 \\
  \eta_i^\star (\delta_i^\star-r+\mean{i,t-1})&=0,
 \\ 
 -\gamma_i^\star \delta_i^\star&=0,
 \end{align*}
which concludes the proof by the convexity of the constraints and the objective function.
\end{proof}

\section{Proof of Theorem~\ref{thm:submodconf}}
\label{app:submodconf}
\begin{proof}
Let $t\in\N^\star $. We consider here the restriction of $g_{i,t}$ to $[0,r-\mean{i,t-1}]$, that we still denote  as $g_{i,t}$.
Notice that for all $i\in [\arms]$, $g_{i,t}$ is either $0$ or a bijection on $[0,r-\mean{i,t-1}]$ by assumption.
For $p=\infty$, we have that $$\max_{\bdelta\in \cC_t^+-\vmean{t-1}}\be_A\transpose\bdelta=\be_A\transpose\pa{\min\sset{g^{-1}_{i,t}(1),r-\mean{i,t-1}}\II{\counter{i}{t-1}\geq 1}+r\II{\counter{i}{t-1}=0}}_i$$
is a linear set function of $A$.
Assume now that $p=1$. To show the submodularity of $A\mapsto\max_{\bdelta\in \cC_t^+-\vmean{t-1}}\be_A\transpose\bdelta$ in this case, we will use the notion of \emph{polymatroid}.
\begin{definition}[Polymatroid]
 A polymatroid is a polytope of the forme $\sset{\bdelta'\in\R_+^\arms, \be_A\transpose\bdelta' \leq F(A),~\forall A\subset [\arms]}$, where $F$ is a non-decreasing submodular function.
\end{definition}
\begin{fact}[Theorem~3 of \citealp{He2012}]
 Let $P$ be a polymatroid, and let $h_1,\dots,h_\arms$ be concave functions. Then $A\mapsto \max_{\bdelta'\in P}\be_A\transpose\pa{h_i(\delta'_i)}_i$ is submodular.
 \label{he2012}
\end{fact}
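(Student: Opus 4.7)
The plan is to prove submodularity of
\[
f(A) \eqdef \max_{\bdelta \in P} \sum_{i \in A} h_i(\delta_i)
\]
(where $P = P(F)$ with $F$ non-decreasing submodular) via the diminishing-returns inequality
$f(A \cup \sset{j}) + f(B) \geq f(A) + f(B \cup \sset{j})$,
for arbitrary $A \subset B \subset [\arms]$ and $j \notin B$. I fix optimizers $\bu^\star \in P$ for $f(A)$ with $u^\star_i = 0$ for $i \notin A$, and $\bv^\star \in P$ for $f(B \cup \sset{j})$ with $v^\star_i = 0$ for $i \notin B \cup \sset{j}$; this normalization is possible because polymatroids are \emph{down-monotone} (if $\bdelta \in P$ and $0 \leq \bdelta' \leq \bdelta$ then $\bdelta' \in P$, a direct consequence of submodularity of $F$). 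The goal is then to exhibit $\bx, \by \in P$ supported on $A \cup \sset{j}$ and $B$ respectively, satisfying
\[
\sum_{i \in A \cup \sset{j}} h_i(x_i) + \sum_{i \in B} h_i(y_i) \geq \sum_{i \in A} h_i(u^\star_i) + \sum_{i \in B \cup \sset{j}} h_i(v^\star_i),
\]
from which $f(A \cup \sset{j}) + f(B) \geq f(A) + f(B \cup \sset{j})$ follows at once.

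The core technical step is an \emph{exchange argument} producing $\bx$ and $\by$. The naive attempt $\bx = \bv^\star|_{A \cup \sset{j}}$ (which is in $P$ by down-monotonicity) combined with $\by$ glued from $\bu^\star|_A$ and $\bv^\star|_{B \backslash A}$ generally fails: the glued $\by$ can violate a polymatroid constraint $\sum_{i \in S} y_i \leq F(S)$, since $\bu^\star$ may already saturate bounds that $\bv^\star|_{B \backslash A}$ stresses further. The remedy is an incremental mass transfer: whenever a polymatroid constraint is violated for $\by$, one shifts $\eps$ units of mass from an over-allocated coordinate of $\by$ onto an available slot of $\bx$, paired with the reverse shift on the $\bx$-side so that the per-coordinate total across $(\bx, \by)$ is preserved. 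The polymatroid exchange axiom (itself equivalent to submodularity of $F$) guarantees that such transfers can always proceed while keeping both $\bx, \by$ inside $P$. Concavity of each $h_i$ ensures that every elementary exchange contributes a net change $-[h_i(a) - h_i(a - \eps)] + [h_{i'}(b + \eps) - h_{i'}(b)]$ (with $a > b$, possibly $i = i'$) which is non-negative: the marginal loss at the larger argument is dominated by the marginal gain at the smaller one.

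The main obstacle is carrying out the global mass-transfer construction and proving its termination, since concavity only yields local infinitesimal improvements. A clean realization writes both $\bu^\star$ and $\bv^\star$ as convex combinations of polymatroid vertices (indexed by permutations via Edmonds' greedy algorithm applied to $F$), pairs up vertex supports, and executes the exchange vertex by vertex, so that convex combination coefficients carry the concavity inequality through to the combined vector. An alternative dual route uses the concave conjugate $h_i(x) = \inf_{\alpha \geq 0}\pa{\alpha x + h_i^\star(\alpha)}$ (for non-decreasing concave $h_i$; general concave $h_i$ reduce to this since $\bdelta \geq 0$) and a minimax swap to obtain
\[
f(A) = \inf_{\balpha \geq 0}\Bigl[\sum_{i \in A} h_i^\star(\alpha_i) + \max_{\bdelta \in P} \sum_{i \in A} \alpha_i \delta_i\Bigr],
\]
and for each fixed $\balpha$ the inner set function of $A$ is modular plus the rank function of the $\balpha$-scaled polymatroid, hence submodular. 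Since a pointwise infimum of submodular functions is not generally submodular, closing this route requires additionally analyzing how the minimizing $\balpha^\star(A)$ varies with $A$, which is an equally delicate step and constitutes the other place where the polymatroid structure has to be used essentially.
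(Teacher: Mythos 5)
First, note that the paper does not prove this statement at all: it is imported verbatim as Fact~\ref{he2012}, i.e., Theorem~3 of \citet{He2012}, and used as a black box inside the proof of Theorem~\ref{thm:submodconf}. So there is no in-paper proof to compare against; what matters is whether your argument stands on its own, and as written it does not.

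Your reduction is sound up to a point: normalizing the optimizers $\bu^\star$ (supported on $A$) and $\bv^\star$ (supported on $B\cup\sset{j}$) is legitimate by down-monotonicity of $P$, and exhibiting feasible $\bx,\by\in P$ supported on $A\cup\sset{j}$ and $B$ with $\sum_{i}h_i(x_i)+\sum_i h_i(y_i)\geq \sum_i h_i(u^\star_i)+\sum_i h_i(v^\star_i)$ would indeed give the diminishing-returns inequality. The gap is that the construction of $\bx,\by$ --- which you yourself identify as ``the main obstacle'' --- is never carried out. The assertion that ``the polymatroid exchange axiom guarantees that such transfers can always proceed while keeping both $\bx,\by$ inside $P$'' is exactly the statement that needs proof, and it is the entire content of the theorem; neither of your two proposed realizations closes it. The vertex-decomposition route is only named, not executed (in particular you do not say how to pair vertices of the two convex decompositions so that each pairwise exchange is feasible and objective-non-decreasing), and you concede that the conjugate-duality route founders on the fact that a pointwise infimum of submodular functions need not be submodular. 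There is also a concrete flaw in the local step: your net-change expression $-[h_i(a)-h_i(a-\eps)]+[h_{i'}(b+\eps)-h_{i'}(b)]$ with ``possibly $i=i'$'' is non-negative by concavity only when $i=i'$; for $i\neq i'$ it can be negative (take $h_i(x)=x$ and $h_{i'}\equiv 0$), since concavity compares increments of the \emph{same} function at different arguments. If you insist on per-coordinate conservation $x_i+y_i=u^\star_i+v^\star_i$ (so that only same-coordinate, balance-increasing exchanges occur and concavity applies), then the feasibility question becomes harder, not easier: you must show that the pair of ``balanced'' vectors with prescribed coordinate sums and prescribed supports can both be placed in $P$, which again is the crux. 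To actually finish, one standard path is via the greedy/decomposition theory for separable concave maximization over polymatroids (Federgruen--Groenevelt-type arguments), or simply to cite He et al.\ as the paper does.
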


Notice that $g_{i,t}^{-1}\pa{\sset{0}}=[0,r-\mean{i,t-1}]$ when $\counter{i}{t-1}=0$, and that $g_{i,t}^{-1}\pa{\cdot}$ is a strictly increasing concave function on $[0,g_{i,t}(r-\mean{i,t-1})]$, as the inverse function of a strictly increasing convex function when $\counter{i}{t-1}\geq 1$. So we can rewrite $\cC_t^+-\vmean{t-1}$ as an union of product sets:
\begin{align*}\cC_t^+-\vmean{t-1}&=\sset{\bdelta\in \prod_{i\in[\arms]}[0,r-\mean{i,t-1}],~\sum_{i\in [\arms]}g_{i,t}(\delta_i)\leq 1}
=\bigcup_{\overset{\bdelta'\in \prod_{i\in[\arms]}[0,g_{i,t}(r-\mean{i,t-1})],}{\sum_{i\in [\arms]}\delta_i'\leq 1}}{~{~\prod_{i\in [\arms]}g_{i,t}^{-1}\pa{\sset{\delta'_i}}}}.\end{align*}
We can thus rewrite our function as \begin{align*}\max_{\bdelta\in \cC_t^+-\vmean{t-1}}\be_A\transpose\bdelta&=\max_{\overset{\bdelta'\in \prod_{i\in[\arms]}[0,g_{i,t}(r-\mean{i,t-1})],}{\sum_{i\in [\arms]}\delta_i'\leq 1}}\be_A\transpose \pa{g_{i,t}^{-1}\pa{\delta'_i}}_i,\end{align*}
with the convention $g_{i,t}^{-1}\pa{0}=r-\mean{i,t-1}$ when $\counter{i}{t-1}=0$.

The constraints' set ${\sset{\bdelta'\in \prod_{i\in[\arms]}[0,g_{i,t}(r-\mean{i,t-1})],~\sum_{i\in [\arms]}\delta_i'\leq 1}}$ is equal to the intersection between $\prod_{i\in[\arms]}[0,g_{i,t}(r-\mean{i,t-1})] $ and the polymatroid $\sset{\bdelta'\in\R_+^\arms, \be_A\transpose\bdelta' \leq \II{A\neq \emptyset},~\forall A\subset [\arms]}$. This intersection is itself equal to the polymatroid $\sset{\bdelta'\in\R_+^\arms, \be_A\transpose\bdelta' \leq \min_{B\subset A}\sset{ \II{B\neq A}+\be_B\transpose \pa{g_{i,t}(r-\mean{i,t-1})}_i},~\forall A\subset [\arms]}$.

Thus, $\max_{\bdelta\in \cC_t^+-\vmean{t-1}}\be_A\transpose\bdelta$ is the optimal objective value on a polymatroid of a  separable  concave  function, as a function of the index set $A$. Now, using Fact~\ref{he2012}, it is submodular. \end{proof}

\begin{figure}[H]

\centering

\resizebox{0.4\columnwidth}{!}
{\includegraphics[trim=120 120 120 120,clip,width=0.1\textwidth]{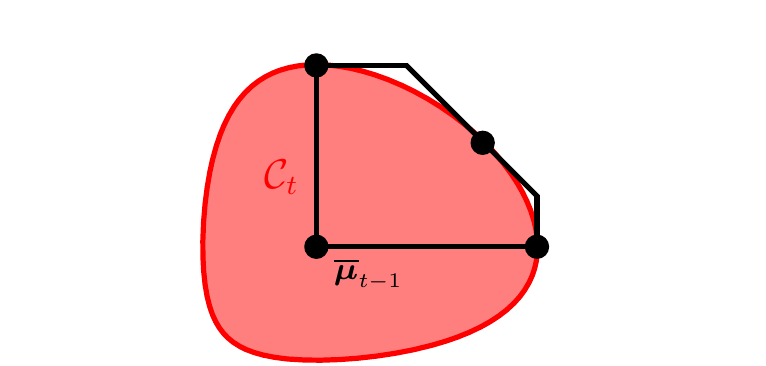}
}

\caption{Illustration of Theorem~\ref{thm:submodconf}: The confidence region $\cC_t$ and the polymatroid defined by the submodular function $A\mapsto \max_{\bmu\in\cC_t-\vmean{t-1}}\be_{A}\transpose \bmu$.}
\vskip -0.1in
\label{exp:1}
\end{figure}

\section{Proof of Theorem~\ref{thm:local_search}}
\label{app:local_search}
Before proving Theorem~\ref{thm:local_search}, we state some well known results about submodular optimization on a matroid.
\begin{proposition}\label{submod:property} 
 Let $A,B\subset [\arms]$. If $F$ is submodular, then\begin{align*}
  \sum_{b\in B\backslash A}\pa{F(B)-F(B\backslash\sset{b})}\leq F(B)-F(A\cap B),\quad
  \sum_{a\in A\backslash B}\pa{F(B\cup\sset{a})-F(B)}\geq F(A\cup B)-F(B).
 \end{align*}
\end{proposition}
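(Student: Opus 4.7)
The plan is to derive both inequalities by a telescoping argument along a chain of sets that interpolates between $B$ (or $A\cap B$) and $A\cup B$ (or $B$), then apply the equivalent diminishing-returns characterization of submodularity to each elementary step. The key equivalent form I will use throughout is: if $F$ is submodular, then for any $C\subset D\subset [\arms]$ and any $x\notin D$,
\[
F(C\cup\{x\})-F(C)\geq F(D\cup\{x\})-F(D),
\]
or, taking $x\in C\subset D$, the dual statement
\[
F(D)-F(D\setminus\{x\})\leq F(C)-F(C\setminus\{x\}).
\]

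For the first inequality, I would enumerate $B\setminus A=\{b_1,\dots,b_k\}$ in any fixed order and set $B_i\triangleq B\setminus\{b_1,\dots,b_i\}$, so that $B_0=B$ and $B_k=A\cap B$. The telescoping identity
\[
F(B)-F(A\cap B)=\sum_{i=1}^{k}\bigl(F(B_{i-1})-F(B_i)\bigr)
\]
holds by construction. For each $i$, apply the dual submodularity form with $C=B_{i-1}$, $D=B$, and $x=b_i\in B_{i-1}\subset B$ to obtain $F(B)-F(B\setminus\{b_i\})\leq F(B_{i-1})-F(B_{i-1}\setminus\{b_i\})=F(B_{i-1})-F(B_i)$. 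Summing over $i\in[k]$ gives exactly the claimed upper bound.

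For the second inequality, I would enumerate $A\setminus B=\{a_1,\dots,a_\ell\}$ and form the increasing chain $B^j\triangleq B\cup\{a_1,\dots,a_j\}$, so $B^0=B$ and $B^\ell=A\cup B$. Telescope
\[
F(A\cup B)-F(B)=\sum_{j=1}^{\ell}\bigl(F(B^j)-F(B^{j-1})\bigr).
\]
Apply the diminishing-returns form with $C=B\subset B^{j-1}=D$ and $x=a_j\notin B^{j-1}$ to get $F(B\cup\{a_j\})-F(B)\geq F(B^{j-1}\cup\{a_j\})-F(B^{j-1})=F(B^j)-F(B^{j-1})$. Summing over $j\in[\ell]$ yields the claimed lower bound.

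There is essentially no obstacle: both inequalities are instances of the same telescoping trick, and the main care is only in choosing the right chain and verifying the containment/membership conditions for each application of submodularity. The order chosen to enumerate $B\setminus A$ and $A\setminus B$ is arbitrary, which is consistent with the symmetric form of the statement.
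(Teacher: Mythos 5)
Your proof is correct and is essentially identical to the paper's: both enumerate $B\backslash A$ (resp.\ $A\backslash B$), telescope along the resulting chain of sets between $B$ and $A\cap B$ (resp.\ $B$ and $A\cup B$), and bound each summand term-by-term via the diminishing-returns form of submodularity. No differences worth noting.
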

\begin{proof}
Let $(b_1,\dots,b_{\abs{B\backslash A}})$ be an ordering of $B\backslash A$. Then, by submodularity of $F$,
 \begin{align*}
  \sum_{i=1}^{\abs{B\backslash A}}\pa{F(B)-F(B\backslash\sset{b_i})}&\leq \sum_{i=1}^{\abs{B\backslash A}}\pa{F(B\backslash\sset{b_1,\dots,b_{i-1}})-F(B\backslash\sset{b_1,\dots,b_{i}})}
  =
  F(B)-F(A\cap B).
 \end{align*}
 In the same way, let $(a_1,\dots,a_{\abs{A\backslash B}})$ be an ordering of $A\backslash B$.
Then, by submodularity of $F$,
 \begin{align*}
  \sum_{i=1}^{\abs{A\backslash B}}\pa{F(B\cup\sset{a_i})-F(B)}&\geq \sum_{i=1}^{\abs{A\backslash B}}\pa{F(B\cup\sset{a_1,\dots,a_{i}})-F(B\cup\sset{a_1,\dots,a_{i-1}})}
  =
  F(A\cup B)-F(B).
 \end{align*}
 \end{proof}
\begin{fact}[Theorem~1 of \citealp{Lee2010}]\label{lee_mapping}
 Let $A,B\in \actions.$ Then, there exists a mapping $\alpha : B\backslash A \to A\backslash B \cup \sset{\emptyset}$ such that 
 \vskip -0.1in
\begin{itemize}
 \item $\forall b\in B\backslash A, ~A\backslash\sset{\alpha(b)}\cup b\in \actions$ 
 \item $\forall a\in A\backslash B, ~\abs{\alpha^{-1}(a)}\leq 1.$
\end{itemize}
\end{fact}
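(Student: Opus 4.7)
The plan is to recognize the statement as a variant of the classical matroid basis exchange and produce $\alpha$ as a matching in an auxiliary bipartite graph via Hall's marriage theorem. I would build $G$ with left part $B\setminus A$ and right part $(A\setminus B)\cup\sset{\star}$, where $\star$ encodes the target $\emptyset$ and has unbounded matching capacity. Place an edge $(b,a)$ for $a\in A\setminus B$ whenever $A\setminus\sset{a}\cup\sset{b}\in \actions$, and an edge $(b,\star)$ whenever $A\cup\sset{b}\in \actions$. A valid $\alpha$ is then exactly a left-saturating matching in $G$ using each non-$\star$ vertex at most once; the second bullet of the Fact is automatic from the matching structure. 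For $\actions=\cB$ no $\star$-edge exists (since $A$ is already a basis), specializing the construction to the classical Brualdi bijective basis exchange.

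The core task will be verifying Hall's condition. Because $\star$ has unbounded capacity, it suffices to handle $T\subseteq B\setminus A$ with $A\cup\sset{b}\notin \cI$ for every $b\in T$ and show $|N(T)\cap(A\setminus B)|\geq |T|$. Under this assumption every $b\in T$ lies in the matroid closure of $A$, so $r(A\cup T)=r(A)=|A|$, where $r$ denotes the rank function. Since $T\cup(A\cap B)\subseteq B\in\cI$ is independent, I would extend it inside $A\cup T$ to an independent set $J$ of size $r(A\cup T)=|A|$ using the augmentation axiom of Definition~\ref{def:matroid}. By construction $T\cup(A\cap B)\subseteq J$ and $J\setminus A=T$, so $A\setminus J\subseteq A\setminus B$ and $|A\setminus J|=|T|$.

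The main obstacle is to relate this constructed $J$ to the actual exchange edges of $G$. I would resolve it by applying the symmetric basis exchange theorem of Brualdi to the two bases $A$ and $J$ of the restricted matroid $M|_{A\cup T}$, both of cardinality $r(A\cup T)=|A|$: this yields a bijection $\phi:A\setminus J\to J\setminus A=T$ such that $A\setminus\sset{a}\cup\sset{\phi(a)}$ is independent in $M$ for every $a\in A\setminus J$, hence in $\actions$ (for $\actions=\cB$ the size matches the rank; for $\actions=\cI$ independence suffices). The inverse $\phi^{-1}:T\to A\setminus J\subseteq A\setminus B$ is then a left-saturating matching of $T$ into $A\setminus B$ through exchange edges of $G$, establishing Hall's condition. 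Combining this inner matching with the edges to $\star$ handling the remaining elements of $B\setminus A$, K\"onig's theorem produces the left-saturating matching on all of $G$, hence the desired $\alpha$.
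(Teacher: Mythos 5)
Your proposal is correct, but note that the paper does not actually prove Fact~\ref{lee_mapping}: it is imported as a black box from Lee et al.\ (2010), so there is no internal argument to compare against, and your derivation is a legitimate self-contained alternative. You peel off the elements $b\in B\setminus A$ with $A\cup\sset{b}\in\cI$ (sent to $\emptyset$; for $\actions=\cB$ there are none), and for the remaining set $T_0$ you use the closure/rank argument to build an independent $J\subseteq A\cup T_0$ with $\abs{J}=\abs{A}$, $J\setminus A=T_0$ and $A\cap B\subseteq J$, then invoke Brualdi's symmetric exchange between the two bases $A$ and $J$ of the restriction to $A\cup T_0$. The steps check out: $T_0\subseteq\mathrm{cl}(A)$ gives $r(A\cup T_0)=\abs{A}$; extending $T_0\cup(A\cap B)$ inside $A\cup T_0$ is licensed by the augmentation axiom; the bookkeeping $\abs{A\setminus J}=\abs{T_0}$ and $A\setminus J\subseteq A\setminus B$ is right; and the exchanged sets are independent of size $\abs{A}$, hence in $\actions$ in both the $\cI$ and $\cB$ cases, with injectivity of $\alpha$ on $A\setminus B$ coming from the bijectivity of the exchange map. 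Two cosmetic remarks. First, the Hall/K\"onig scaffolding is superfluous: applied once to the full set $T_0$, your argument already exhibits the injective assignment $\phi^{-1}:T_0\to A\setminus B$ explicitly, so no subset-counting condition needs verifying (and the relevant saturation theorem would be Hall's, not K\"onig's, in any case). Second, your route reduces Fact~\ref{lee_mapping} to Brualdi's lemma, which the paper states separately as Fact~\ref{brualdi_mapping} for the greedy analysis; this is an appealing economy --- one exchange theorem underlies both the local-search and greedy guarantees --- at the price of quoting Brualdi rather than arguing from the matroid axioms alone.
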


\begin{proposition}\label{2app}
Let $A,B\in \actions$. Let $F$ be a submodular function and  $\alpha: B\backslash A\to A\backslash B\cup \sset{\emptyset}$ be the mapping given in Fact~\ref{lee_mapping}. Then,
 \begin{align*}\sum_{b\in B\backslash A}\pa{F(A)-F(A\backslash\sset{\alpha(b)}\cup \sset{b})} +\sum_{a\in A\backslash B,~\alpha^{-1}(a)=\emptyset}(F(A)-F(A\backslash\sset{a})) \leq  2 F(A)-F(A\cup B)-F(A\cap B).\end{align*}
\end{proposition}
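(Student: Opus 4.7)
The plan is to bound each swap term $F(A)-F(A\setminus\{\alpha(b)\}\cup\{b\})$ by a ``delete'' contribution plus an ``add'' contribution, then rearrange the resulting sums using the injectivity property of $\alpha$ from Fact~\ref{lee_mapping}, and finally apply Proposition~\ref{submod:property} to both parts.

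\textbf{Step 1: swap decomposition.} For any $b \in B\setminus A$ with $\alpha(b)\neq \emptyset$, I would write
\[
F(A)-F(A\setminus\{\alpha(b)\}\cup\{b\}) = \bigl[F(A)-F(A\cup\{b\})\bigr]+\bigl[F(A\cup\{b\})-F(A\setminus\{\alpha(b)\}\cup\{b\})\bigr].
\]
By submodularity, the marginal gain of adding $\alpha(b)$ to the smaller set $A\setminus\{\alpha(b)\}$ is at least the marginal gain of adding $\alpha(b)$ to the larger set $A\setminus\{\alpha(b)\}\cup\{b\}$, i.e.\,
\[
F(A)-F(A\setminus\{\alpha(b)\}) \;\geq\; F(A\cup\{b\})-F(A\setminus\{\alpha(b)\}\cup\{b\}).
\]
Combining these two displays gives
\[
F(A)-F(A\setminus\{\alpha(b)\}\cup\{b\}) \;\leq\; \bigl[F(A)-F(A\cup\{b\})\bigr] + \bigl[F(A)-F(A\setminus\{\alpha(b)\})\bigr].
\]
When $\alpha(b)=\emptyset$, the swap term reduces directly to $F(A)-F(A\cup\{b\})$, so the bound holds trivially if we interpret the second bracket as zero.

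\textbf{Step 2: rearranging.} Summing over $b \in B\setminus A$, I get
\[
\sum_{b\in B\setminus A}\!\bigl[F(A)-F(A\setminus\{\alpha(b)\}\cup\{b\})\bigr] \leq \sum_{b\in B\setminus A}\!\bigl[F(A)-F(A\cup\{b\})\bigr] + \!\!\sum_{\substack{b\in B\setminus A\\ \alpha(b)\neq\emptyset}}\!\!\bigl[F(A)-F(A\setminus\{\alpha(b)\})\bigr].
\]
Using the injectivity clause of Fact~\ref{lee_mapping} ($|\alpha^{-1}(a)|\leq 1$ for all $a\in A\setminus B$), the last sum equals $\sum_{a\in \alpha(B\setminus A)\cap (A\setminus B)}[F(A)-F(A\setminus\{a\})]$. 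Adding the second sum in the statement of the proposition, which indexes the $a\in A\setminus B$ with $\alpha^{-1}(a)=\emptyset$, I obtain
\[
\sum_{a\in A\setminus B}\bigl[F(A)-F(A\setminus\{a\})\bigr]
\]
because these two index sets partition $A\setminus B$.

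\textbf{Step 3: applying Proposition~\ref{submod:property}.} It remains to bound $\sum_{b\in B\setminus A}[F(A)-F(A\cup\{b\})]$ and $\sum_{a\in A\setminus B}[F(A)-F(A\setminus\{a\})]$. Applying Proposition~\ref{submod:property} with the roles of $A$ and $B$ swapped (or equivalently, reading it with $B\leftarrow A$), the first sum is at most $F(A)-F(A\cup B)$ and the second is at most $F(A)-F(A\cap B)$. Adding yields the desired $2F(A)-F(A\cup B)-F(A\cap B)$, which completes the proof.

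The only delicate point is Step~1: getting the direction of the submodularity inequality right, and handling the $\alpha(b)=\emptyset$ case cleanly so that Step~2's bookkeeping via the injectivity of $\alpha$ merges correctly with the extra sum in the statement.
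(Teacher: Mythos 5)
Your proof is correct and follows essentially the same route as the paper's: both split each swap term through an intermediate set (you use $A\cup\{b\}$, the paper uses $A\setminus\{\alpha(b)\}$, but the submodularity inequality invoked is the same four-point inequality on $A$, $A\cup\{b\}$, $A\setminus\{\alpha(b)\}$, $A\setminus\{\alpha(b)\}\cup\{b\}$), then merge the deletion terms with the $\alpha^{-1}(a)=\emptyset$ sum via injectivity of $\alpha$ to cover all of $A\setminus B$, and finish with the two inequalities of Proposition~\ref{submod:property}. The bookkeeping in your Steps 1--2, including the $\alpha(b)=\emptyset$ case, is handled correctly.
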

\begin{proof}
 We decompose  $\sum_{b\in B\backslash A}\pa{F(A)-F(A\backslash\sset{\alpha(b)}\cup\sset{b})}$ into sum of two terms,
$$\sum_{b\in B\backslash A}\pa{F(A)-F(A\backslash\sset{\alpha(b)})}+\sum_{b\in B\backslash A}\pa{F(A\backslash\sset{\alpha(b)})-F(A\backslash\sset{\alpha(b)}\cup\sset{b})}.$$
Remark that the first part is equal to $$\sum_{a\in \alpha\pa{B\backslash A}}\pa{F(A)-F(A\backslash\sset{a})}=\sum_{a\in A\backslash B,~\alpha^{-1}\pa{a}\neq\emptyset}\pa{F(A)-F(A\backslash\sset{a})}.$$
Thus, together with $\sum_{a\in A\backslash B,~\alpha^{-1}(a)=\emptyset}\pa{{F}(A)-{F}(A\backslash\sset{a})}$, we get 
that 
$$\sum_{b\in B\backslash A}\pa{F(A)-F(A\backslash\sset{\alpha(b)}\cup \sset{b})} +\sum_{a\in A\backslash B,~\alpha^{-1}(a)=\emptyset}(F(A)-F(A\backslash\sset{a}))$$ is equal to
$$\sum_{a\in A\backslash B}\pa{F(A)-F(A\backslash\sset{a})}+\sum_{b\in B\backslash A}\pa{F(A\backslash\sset{\alpha(b)})-F(A\backslash\sset{\alpha(b)}\cup\sset{b})}.$$
Finally, we upper bound the first term by $F(A)-F(A\cap B)$ using first inequality of Lemma~\ref{submod:property}, and the second term by $F(A)-F(A\cup B)$ using first, the submodularity of $F$ to remove $\alpha(b)$ in the summands, and then the second inequality of Lemma~\ref{submod:property}.
\end{proof}

\begin{proof}[Proof of Theorem~\ref{thm:local_search}]
The proof is divided into two parts:
\paragraph{Approximation guarantee}If Algorithm~\ref{algo:local_search} outputs $\emptyset$ before entering in the while loop, then by submodularity, for any $S\in \cI,$ $$(L+F)(S)\leq \sum_{x\in S}(L+F)\pa{\sset{x}}\leq 0.$$
Thus, $\emptyset$ is a maximizer of $L+F$.

Otherwise, the output $S$ of Algorithm~\ref{algo:local_search} satisfies the local optimality of the while loop.
We apply Proposition~\ref{2app} with $A=S$ and $B=O$ for $L$ and $F$ separately,
$$\sum_{b\in O\backslash S}\pa{L(S)-L(S\backslash\sset{\alpha(b)}\cup \sset{b})} +\sum_{a\in S\backslash O,~\alpha^{-1}(a)=\emptyset}(L(S)-L(S\backslash\sset{a})) \leq  2 L(S)-L(S\cup O)-L(S\cap O),$$
$$\sum_{b\in O\backslash S}\pa{F(S)-F(S\backslash\sset{\alpha(b)}\cup \sset{b})} +\sum_{a\in S\backslash O,~\alpha^{-1}(a)=\emptyset}(F(S)-F(S\backslash\sset{a})) \leq  2 F(S)-F(S\cup O)-F(S\cap O).$$
Then, we sum these two inequalities,
\begin{align*}\sum_{b\in O\backslash S}\pa{\pa{L+F}(S)-\pa{L+F}(S\backslash\sset{\alpha(b)}\cup \sset{b})} +&\sum_{a\in S\backslash O,~\alpha^{-1}(a)=\emptyset}(\pa{L+F}(S)-\pa{L+F}(S\backslash\sset{a}))\\ &\leq  2 \pa{L+F}(S)-\pa{L+F}(S\cup O)-\pa{L+F}(S\cap O)\\&=2 F(S)-F(S\cup O)-F(S\cap O) +L(S)-L(O),\end{align*}
where the last equality uses linearity of $L$. Since $F$ is increasing and non-negative, $F(S\cup O)+F(S\cap O)\geq F(O)$, and we get  \begin{align*}\sum_{b\in O\backslash S}\pa{\pa{L+F}(S)-\pa{L+F}(S\backslash\sset{\alpha(b)}\cup \sset{b})} +\sum_{a\in S\backslash O,~\alpha^{-1}(a)=\emptyset}(\pa{L+F}(S)-\pa{L+F}(S\backslash\sset{a}))\\ \leq  2 F(S)-F(O) +L(S)-L(O).\end{align*}
From the local optimality of $S$, the left hand term in this inequality is lower bounded by \[\sum_{b\in O\backslash S}\frac{-\varepsilon}{m} F(S)+\sum_{a\in S\backslash O,~\alpha^{-1}(a)=\emptyset}\frac{-\varepsilon}{m} F(S)\geq -2\varepsilon F(S).\] The last  statement finishes the proof for the approximation inequality.

\paragraph{Time complexity}  Computing $S_0$ has a  negligible complexity compared to the while loop. The following lemma gives a characterization of $S_0$.
\begin{lemma}  
$S_0\in\argmax\sset{L(A),~A\in \cI,~\pa{F+L}\pa{A}>0}.$
\end{lemma}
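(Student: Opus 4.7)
The plan is to split the argument according to which branch of the initialization produced $S_0$.

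Case 1 ($S_{\rm init}\neq\emptyset$). Here $S_0=S_{\rm init}\in\argmax_{A\in\cI}L(A)$. Since $L$ is normalized, $L(S_0)\geq L(\emptyset)=0$, and by the standing positivity assumption $F(S_0)>0$, so $(L+F)(S_0)>0$ and $S_0$ belongs to the feasible set $\{A\in\cI:(L+F)(A)>0\}$. Because $S_0$ maximizes $L$ over all of $\cI$, it a fortiori maximizes $L$ over this subset, which yields the claim.

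Case 2 ($S_{\rm init}=\emptyset$ and the else branch is entered). In this case $\max_{A\in\cI}L(A)=L(\emptyset)=0$, so $L(A)\leq 0$ for every $A\in\cI$; in particular $L(\{x\})\leq 0$ for every singleton $\{x\}\in\cI$ (matroid downward-closedness). Pick any $A\in\cI$ with $(L+F)(A)>0$ (otherwise the argmax is vacuous). A normalized submodular function is subadditive, so $F(A)\leq\sum_{x\in A}F(\{x\})$; together with the linearity of $L$ this gives
\[
0<(L+F)(A)=L(A)+F(A)\leq\sum_{x\in A}(L+F)(\{x\}),
\]
so some $x_0\in A$ satisfies $(L+F)(\{x_0\})>0$. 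Since $\{x_0\}\in\cI$, the definition of $S_0$ in the else branch forces $L(\{x_0\})\leq L(S_0)$.

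It remains to lift this from $x_0$ to all of $A$. Using that $L$ is linear and normalized, $L(A)=\sum_{x\in A}L(\{x\})$; every term in this sum is $\leq 0$ by the previous paragraph, so dropping all of them except $x_0$ only increases the value, giving $L(A)\leq L(\{x_0\})\leq L(S_0)$. Combined with Case~1 this establishes $S_0\in\argmax\{L(A):A\in\cI,\,(L+F)(A)>0\}$.

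The only delicate step is Case~2 with a non-singleton $A$: one has to rule out the possibility that $L(A)$ beats $L$ on every feasible singleton. The key observations that make it go through are (i) $\max_{\cI}L=0$ is forced by $S_{\rm init}=\emptyset$, and hence all singleton $L$-values are non-positive, and (ii) subadditivity of the normalized submodular $F$ lets us pull out a single element $x_0\in A$ that is itself feasible; the non-positive singletons then act as ``free mass'' that can only decrease $L(A)$ below $L(\{x_0\})$.
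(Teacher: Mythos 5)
Your proof is correct and follows essentially the same route as the paper's: in the $S_{\rm init}\neq\emptyset$ case you use $L(S_0)\geq 0$ plus positivity of $F$ to get feasibility, and in the $S_{\rm init}=\emptyset$ case you use subadditivity of the normalized submodular part to extract a feasible singleton $x_0\in A$ and the non-positivity of all singleton $L$-values to conclude $L(A)\leq L(\{x_0\})\leq L(S_0)$. The paper phrases the extraction step as submodularity of $F+L$ and the comparison step as ``$L$ is non-increasing,'' but these are the same observations you make, just spelled out less explicitly.
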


\begin{proof}From Algorithm~\ref{algo:local_search},
if $S_{\rm init}\neq \emptyset$, then $S_0=S_{\rm init}$ and $L(S_0)=\max_{A\in \cI}L(A)\geq 0.$ Thus, $F(S_0)>0$ by assumption on $F$, giving $\pa{F+L}\pa{S_0}>0,$ which ends the proof. If $S_{\rm init}= \emptyset$, then $L(S_0)=\max\sset{L\pa{\sset{x}},~\sset{x}\in \cI,~\pa{L+F}\pa{\sset{x}}>0}.$ Let $A\in\argmax\sset{L(A),~A\in \cI,~\pa{F+L}\pa{A}>0}$. $A$ is clearly non-empty, and by submodularity of $F+L$, there exists $x\in A$ such that $\pa{F+L}\pa{\sset{x}}>0.$ $L$ is non-increasing from
$S_{\rm init}= \emptyset$, so we get $L\pa{\sset{x}}\geq L\pa{A}$, which means there is a singleton $\sset{x}$ in $\argmax\sset{L(A),~A\in \cI,~\pa{F+L}\pa{A}>0}$, so $S_0\in\argmax\sset{L(A),~A\in \cI,~\pa{F+L}\pa{A}>0}$,
which finishes the proof.  
\end{proof}

From this lemma, necessarily $L(S_0)\geq L(S_\ell)$ for every iterations $\ell \geq 1$, since the sequence $\pa{L(S_\ell)+F(S_\ell)}_\ell$ is increasing, and thus $(F+L)(S_\ell)>0,~ \forall \ell \geq 1$.  
At each iteration $\ell\geq 1$, Algorithm~\ref{algo:local_search} constructs $S_\ell$ such that $$F(S_\ell)> \pa{1+\frac{\varepsilon}{m}}F(S_{\ell-1}) + L(S_{\ell-1})-L(S_{\ell}).$$  
 Thus, we must have 
\begin{align*}F(S_\ell) - \pa{1+\frac{\varepsilon}{m}}^\ell F(S_0)&\geq\sum_{j=1}^\ell \pa{1+\frac{\varepsilon}{m}}^{\ell-j} \pa{L(S_{j-1})-L(S_j)}\\
&=L(S_0)\pa{1+\frac{\varepsilon}{m}}^{\ell-1}-\frac{\varepsilon}{m}\sum_{j=1}^{\ell-1} L(S_j)\pa{1+\frac{\varepsilon}{m}}^{\ell-j-1}-L(S_\ell)\\
&\geq L(S_0)\pa{1+\frac{\varepsilon}{m}}^{\ell-1}-\frac{\varepsilon}{m}\sum_{j=1}^{\ell-1} L(S_0)\pa{1+\frac{\varepsilon}{m}}^{\ell-j-1}-L(S_0)=0,
\end{align*}
where the last inequality uses $L(S_0)\geq L(S_\ell),~ \forall \ell \geq 1$.
This gives the following upper bound on the number of iteration $\ell$: $$\ell\leq \frac{\log\pa{\frac{F(S_\ell)}{F(S_0)}}}{\log\pa{1+\frac{\varepsilon}{m}}}\leq \frac{\log\pa{\frac{\max_{A\in \actions}F(A)}{F(S_0)}}}{\log\pa{1+\frac{\varepsilon}{m}}}\cdot$$
Finally, the result follows remarking that time complexity per iteration is $\cO\pa{m\arms}$.
\end{proof}

\section{Proof of Theorem~\ref{thm:greedy}}
\label{app:greedy}
As we did in the previous section, before starting the proof of Theorem~\ref{thm:greedy}, we state some useful results. 
\begin{fact}[Brualdi's lemma]\label{brualdi_mapping}
Let $A,B\in \cB$. Then, there exists a bijection $\beta : A \to B$ such that $$\forall a\in A,~ A\backslash\sset{a}\cup\sset{\beta(a)}\in \cB.$$
Furthermore, $\beta$
is the identity on $A\cap B$.
\end{fact}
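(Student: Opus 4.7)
The plan is to reduce Brualdi's lemma to an application of Hall's marriage theorem on a bipartite exchange graph, and then to verify Hall's condition through a fundamental circuit argument.

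Since the desired map must be the identity on $A\cap B$, it suffices to build a bijection $\beta:A\setminus B\to B\setminus A$ (these two sides have equal cardinality because $|A|=|B|=m$) such that $A\setminus\sset{a}\cup\sset{\beta(a)}\in\cB$ for every $a\in A\setminus B$. Extending by $\beta(a)=a$ on $A\cap B$ trivially satisfies the required property, since $A\setminus\sset{a}\cup\sset{a}=A\in\cB$. To produce this partial bijection, I would form the bipartite graph $G$ with parts $A\setminus B$ and $B\setminus A$, where $a\sim b$ iff $A\setminus\sset{a}\cup\sset{b}\in\cB$. A perfect matching in $G$ is exactly the map I want, so by Hall's theorem it suffices to establish $\abs{N_G(X)}\geq\abs{X}$ for every $X\subseteq A\setminus B$.

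Fix such an $X$. Because $X\cap B=\emptyset$, one has $A\cap B\subseteq A\setminus X$, hence $(A\setminus X)\cup(B\setminus A)\supseteq(A\cap B)\cup(B\setminus A)=B$, so this ground set has rank at least $m$. Since $A\setminus X$ is independent of size $m-\abs{X}$, the standard matroid augmentation property lets me extend $A\setminus X$ to a basis $C=(A\setminus X)\cup Y$ with $Y\subseteq B\setminus A$ and $\abs{Y}=\abs{X}$. I then claim $Y\subseteq N_G(X)$, which immediately gives $\abs{N_G(X)}\geq\abs{Y}=\abs{X}$. To prove the claim, fix $b\in Y\subseteq B\setminus A$. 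Since $A$ is a basis and $b\notin A$, the set $A\cup\sset{b}$ contains a unique fundamental circuit $C(b,A)\subseteq A\cup\sset{b}$, and it is a standard matroid fact that for every $a\in C(b,A)\setminus\sset{b}$ the exchange $A\setminus\sset{a}\cup\sset{b}$ is again a basis. If $C(b,A)\cap X=\emptyset$, then $C(b,A)\subseteq(A\setminus X)\cup\sset{b}\subseteq C$, contradicting the independence of $C$. Hence some $a\in X$ lies in $C(b,A)$, so $b\in N_G(X)$. Hall's theorem then delivers the matching, completing the proof.

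The main obstacle is choosing the right auxiliary ground set, $(A\setminus X)\cup(B\setminus A)$, whose rank inequality produces the candidate neighbors $Y$ of the correct size, and then linking those candidates back to $X$ via fundamental circuits rather than through a stronger exchange axiom, which would render the argument circular. The two small but indispensable matroid-theoretic steps are verifying that $A\cap B\subseteq A\setminus X$ (so that $B$ sits inside the auxiliary set) and that $C(b,A)$ cannot fit entirely inside the independent set $C$.
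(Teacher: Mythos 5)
Your proof is correct. Note that the paper does not actually prove this fact: it simply cites \citet{Brualdi1969} and Corollary~39.12a of \citet{Schrijver2008}, so there is no internal argument to compare against. What you give is the standard self-contained derivation, and every step checks out: the reduction to a perfect matching between $A\setminus B$ and $B\setminus A$ (with $\beta$ the identity on $A\cap B$ handled trivially since $A\in\cB$); the verification of Hall's condition by extending the independent set $A\setminus X$ inside $(A\setminus X)\cup(B\setminus A)$, whose rank is $m$ because it contains the basis $B$ (this is exactly where the hypothesis $X\subseteq A\setminus B$, hence $A\cap B\subseteq A\setminus X$, is needed); and the fundamental-circuit argument showing each $b\in Y$ is adjacent to some $a\in X$, since otherwise the circuit $C(b,A)$ would sit inside the independent set $C=(A\setminus X)\cup Y$. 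The two auxiliary facts you invoke --- uniqueness of the fundamental circuit of $b$ with respect to the basis $A$, and that $A\setminus\sset{a}\cup\sset{b}\in\cB$ for any $a\in C(b,A)\setminus\sset{b}$ --- are genuinely more elementary than the statement being proved, so there is no circularity. This is essentially the proof one finds in the literature the paper points to, so your contribution is to make explicit what the paper leaves as a black box.
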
\begin{proof}
A proof is given by \citet{Brualdi1969} and is also proved by \citet{Schrijver2008}, as Corollary~39.12a. 
\end{proof}

\begin{proposition}
Let $A,B\in \cB$. Let $F$ be a submodular function and  $\beta: A\to B$ be the mapping given in Fact~\ref{brualdi_mapping}. Let $a_1,\dots,a_k$ be elements of $A$, and $A_i=\sset{a_1,\dots,a_i}$. Then,
 \begin{align*}\sum_{i\in [k]}\pa{F(A_i)-F(A_{i-1}\cup\sset{\beta(a_i)})} \leq  2 F(A)-F(A\cup B)-F(\emptyset).\end{align*}\label{2app'}
\end{proposition}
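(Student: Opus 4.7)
The plan is to prove the inequality by a telescoping-plus-submodularity argument, using Brualdi's bijection $\beta$ to pair marginal gains in $A$ with marginal gains from elements of $B$.

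First, I would rewrite each summand by inserting $F(A_i\cup\{\beta(a_i)\})$ as a pivot:
\[
F(A_i)-F(A_{i-1}\cup\{\beta(a_i)\})=\bigl[F(A_i\cup\{\beta(a_i)\})-F(A_{i-1}\cup\{\beta(a_i)\})\bigr]-\bigl[F(A_i\cup\{\beta(a_i)\})-F(A_i)\bigr].
\]
When $a_i\in A\cap B$, we have $\beta(a_i)=a_i$ and both sides vanish, so this is trivially consistent. When $a_i\in A\setminus B$, we have $\beta(a_i)\in B\setminus A$, so in particular $a_i\notin A_{i-1}\cup\{\beta(a_i)\}$; since $A_{i-1}\subset A_{i-1}\cup\{\beta(a_i)\}$, submodularity of $F$ applied to the marginal gain of $a_i$ yields
\[
F(A_i\cup\{\beta(a_i)\})-F(A_{i-1}\cup\{\beta(a_i)\})\leq F(A_i)-F(A_{i-1}).
\]
Plugging this into the decomposition gives, for every $i\in[k]$,
\[
F(A_i)-F(A_{i-1}\cup\{\beta(a_i)\})\leq \bigl[F(A_i)-F(A_{i-1})\bigr]-\bigl[F(A_i\cup\{\beta(a_i)\})-F(A_i)\bigr].
\]

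Next, I would sum over $i\in[k]$ and handle the two resulting sums separately. The first sum telescopes to $F(A_k)-F(A_0)=F(A)-F(\emptyset)$. For the second, submodularity again gives $F(A_i\cup\{\beta(a_i)\})-F(A_i)\geq F(A\cup\{\beta(a_i)\})-F(A)$, since $A_i\subset A$ and $\beta(a_i)\notin A_i$ (the only case where $\beta(a_i)\in A$ is when $a_i\in A\cap B$, in which case both sides are zero). Because $\beta$ is the identity on $A\cap B$ and hence restricts to a bijection from $A\setminus B$ onto $B\setminus A$, summing over $i$ rewrites this as
\[
\sum_{i\in[k]}\bigl[F(A_i\cup\{\beta(a_i)\})-F(A_i)\bigr]\geq \sum_{b\in B\setminus A}\bigl[F(A\cup\{b\})-F(A)\bigr].
\]

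Finally, the second inequality of Proposition~\ref{submod:property} applied with the roles $A\leftarrow A$, $B\leftarrow A\cup B$ gives $\sum_{b\in B\setminus A}[F(A\cup\{b\})-F(A)]\geq F(A\cup B)-F(A)$. Combining the three bounds yields
\[
\sum_{i\in[k]}\bigl[F(A_i)-F(A_{i-1}\cup\{\beta(a_i)\})\bigr]\leq \bigl[F(A)-F(\emptyset)\bigr]-\bigl[F(A\cup B)-F(A)\bigr]=2F(A)-F(A\cup B)-F(\emptyset),
\]
which is the desired inequality. The main conceptual step (and the only place where care is needed) is the pivoting identity together with the sign check when $a_i\in A\cap B$; the rest is bookkeeping: telescoping, one application of submodularity to shift the marginal from $A_i$ to $A$, and one application of the standard submodular inequality to convert a sum of singleton marginals into $F(A\cup B)-F(A)$.
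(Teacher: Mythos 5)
Your proof is correct and takes essentially the same route as the paper's: both split each summand via a pivot into a telescoping part summing to $F(A)-F(\emptyset)$ and a part controlled by diminishing returns together with the second inequality of Proposition~\ref{submod:property}, the only difference being your pivot $F(A_i\cup\sset{\beta(a_i)})$ in place of the paper's $F(A_{i-1})$. One small slip: the last step is Proposition~\ref{submod:property}'s second inequality with the roles of $A$ and $B$ swapped (yielding $\sum_{b\in B\backslash A}\pa{F(A\cup\sset{b})-F(A)}\geq F(A\cup B)-F(A)$), not with $B\leftarrow A\cup B$ as you wrote.
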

\begin{proof}
We can split $\sum_{i\in [k]}\pa{F\pa{A_{i-1}\cup\sset{a_i}}-F\pa{A_{i-1}\cup\sset{\beta\pa{a_i}}}}$ into two terms,
\begin{align*}
\sum_{i=1}^k\pa{F\pa{A_{i-1}\cup\sset{a_i}}-F\pa{A_{i-1}}
}+\sum_{i=1}^k\pa{{F\pa{A_{i-1}}-F\pa{A_{i-1}\cup\sset{\beta\pa{a_i}}}}}.
\end{align*}
The first term is equal to $F(A_k)-F(\emptyset)$. Using submodularity of $F$, the second term is upper bounded by $$\sum_{i=1}^k\pa{{F\pa{A_{m}}-F\pa{A_{m}\cup\sset{\beta\pa{a_i}}}}},$$ which is upper bounded by $F(A_k)-F(A_k\cup B)$ thanks to Proposition~\ref{submod:property} and its second inequality.
\end{proof}
\begin{proof}[Proof of Theorem~\ref{thm:greedy}]
The time complexity proof is trivial. 
 Let $S_i\triangleq\sset{s_1,\dots,s_i}$ be the set maintained in Algorithm~\ref{algo:greedy} after~$i$ iterations. Instatianting Proposition~\ref{2app'} with $A_i=S_i$ and $B=O$, we have \begin{align}\label{greedysub}\sum_{i\in [k]}\pa{F(S_i)-F(S_{i-1}\cup\sset{\beta(s_i)})} \leq  2 F(S)-F(S\cup O)-F(\emptyset).\end{align}
Furthermore, we also have, by linearity of $L$, and bijectivity of $\beta$,
\begin{align}\sum_{i\in [k]}\pa{L(S_i)-L(S_{i-1}\cup\sset{\beta(s_i)})} = \sum_{i\in [k]}\pa{L(\sset{s_i})-L(\sset{\beta(s_i)}))}=L(S)-L(O).\label{greedylin}\end{align}
Thus, we can sum up \eqref{greedysub} and \eqref{greedylin} to get 
\begin{align*}\label{greedysub}\sum_{i\in [k]}\pa{\pa{L+F}(S_i)-\pa{L+F}(S_{i-1}\cup\sset{\beta(s_i)})} \leq  2F(S) -F(S\cup O)-F(\emptyset)+L(S)-L(O)\\\leq2F(S) -F(O)+L(S)-L(O), \end{align*}
where the last inequality uses the fact that $F$ is increasing and $F(\emptyset)=0$.
We finish the proof remarking that by definition of Algorithm~\ref{algo:greedy}, ${\pa{L+F}(S_i)-\pa{L+F}(S_{i-1}\cup\sset{\beta(s_i)})}\geq 0$.
\end{proof}
\section{Proof of Theorem~\ref{thm:ratio}}
\label{app:ratio}
\begin{proof}
Let $A$ be the output of Algorithm~\ref{algo:ratio} and let $$\cL_{\kappa_1,\kappa_2}(\lambda,S)\triangleq L_1(S)-\kappa_1 F_1(S) - \lambda\pa{L_2(S)+\kappa_2 F_2(S)}.$$
Recall that $\cL_\kappa=\cL_{\kappa,\kappa}.$
Algorithm~\ref{algo:ratio} satisfies either $\cL_\kappa(0,A_0)\leq 0$  ---  in which case Theorem~\ref{thm:ratio} is trivial since \[\pa{\frac{L_1(A)-(\kappa+\eta) F_1(A)}{L_2(A)+\kappa F_2(A)}}^+=\lambda^\star=0\] --- or $\cL_\kappa(0,A_0)> 0$, in which case we have
\begin{equation} 0> \cL_{\kappa}(\lambda_2,A)\geq\cL_{\kappa+\eta,\kappa}(\lambda_2-\delta,A)\geq\cL_{\kappa+\eta,\kappa}(\lambda_1,A)\geq \cL_{\kappa+\eta,\kappa}(\lambda^\star,A).\label{ratio1}\end{equation}
The first inequality is comes from the update of $\lambda_2$: Notice that before the while loop, we have \[\lambda_2=\frac{L_1(A_0)-F_1(A_0)}{L_2(A_0)+F_2(A_0)}>\frac{L_1(A_0)-\kappa F_1(A_0)}{L_2(A_0)+\kappa F_2(A_0)}>0,\] since $F_2(A_0)>0$, so $0> \cL_{\kappa}(\lambda_2,A_0)$ multiplying by $L_2(A_0)+F_2(A_0)$ on both sides. Notice that in particular, this inequality gives  that $A\neq \emptyset$.

The second inequality follows from
$$\delta=\frac{\eta \min_{\sset{s}\in\actions} F_1(\sset{s})}{L_2(B)+\kappa^2 F_2(B)}\leq \frac{\eta F_1(A)}{L_2(A)+\kappa F_2(A)}\quad \text{since }A\neq \emptyset\text{ and }L_2(B)+\kappa^2 F_2(B)\geq L_2(A)+\kappa F_2(A).$$
Thus, multiplying by $L_2(A)+\kappa F_2(A)>0$, and adding $L_1(A)-\kappa F_1(A)-\lambda_2\pa{L_2(A)+\kappa F_2(A)}$ gives
$$L_1(A)-(\kappa+\eta) F_1(A)-(\lambda_2-\delta)\pa{L_2(A)+\kappa F_2(A)}\leq L_1(A)-\kappa F_1(A)-\lambda_2\pa{L_2(A)+\kappa F_2(A)},$$
i.e.,
$\cL_{\kappa+\eta,\kappa}(\lambda_2-\delta,A)\leq \cL_\kappa(\lambda_2,A).$

The third inequality uses $ \lambda_2-\lambda_1\leq\delta ,$
and the last inequality uses $\lambda_1\leq \lambda^\star$. Indeed, since $\cL_{\kappa}(\lambda_1,S)\geq 0$, the approximation relation given by $\textsc{Algo}_{\kappa},$ 
$$\cL_{\kappa}(\lambda_1,S)\leq  \cL(\lambda_1,O),$$
where $O$ is the minimizer of $\pa{\frac{L_1-F_1}{L_2+F_2}}^{\!\!+}$ (for the constraints considered by $\textsc{Algo}_{\kappa}$), gives $0\leq \cL(\lambda_1,O)$.
Thus, $$\cL^+(\lambda_1,O)\triangleq\pa{L_1(O)-F_1(O)}^+-\lambda_1\pa{L_2(O)+F_2(O)}\geq \cL(\lambda_1,O)\geq 0.$$
Finally, since $L_2(O)+F_2(O)> 0$ ($O\neq \emptyset$), we have $\lambda_1\leq \lambda^\star.$

In \eqref{ratio1}, since $A\neq \emptyset$, we have $\frac{L_1(A)-(\kappa+\eta) F_1(A)}{L_2(A)+\kappa F_2(A)}\leq\lambda^\star$ and therefore, $\pa{\frac{L_1(A)-(\kappa+\eta) F_1(A)}{L_2(A)+\kappa F_2(A)}}^+\leq\lambda^\star$.

The time complexity for the binary search is $\cO(\log(1/\delta))\leq\cO(\log( m t/\eta))$ for $\cC_t$ given by any algorithm of Table~\ref{table:algos}, and $F(A)=\max_{\bdelta\in \cC_t^+-\vmean{t-1}}\be_A\transpose\bdelta$.
\end{proof}

\end{document}